\numberwithin{equation}{section}
\numberwithin{equation}{section}
\newtheorem{theorem}{Theorem}
\newtheorem{lemma}{Lemma}
\newtheorem{corollary}{Corollary}
\newtheorem{definition}{Definition}
\newtheorem{remark}{Remark}
\def\NN{\mathbb{N}}
\def\RR{\mathbb{R}}
\def\BB{\mathbb{B}}
\def\CC{\mathbb{C}}
\begin{document}
	
	\title{Expressivity and Approximation Properties of Deep \\Neural Networks with ${\rm ReLU}^k$ Activation}
	\author{Juncai He\footnotemark[1] \and Tong Mao\footnotemark[1] \and  Jinchao Xu\footnotemark[1] \footnotemark[2]}
	\date{}                                           % Activate to display a given date or no date
	
	\maketitle
	\renewcommand{\thefootnote}{\fnsymbol{footnote}} 
	\footnotetext[1]{Computer, Electrical and Mathematical Science and Engineering Division, King Abdullah University of Science and Technology, Thuwal 23955, Saudi Arabia.} 
	\footnotetext[2]{Department of Mathematics, The Pennsylvania State University, University Park, PA 16802, USA.}

	\begin{abstract}
		In this paper, we investigate the expressivity and approximation properties of deep neural networks employing the ReLU$^k$ activation function for $k \geq 2$. 
		Although deep ReLU networks can approximate polynomials effectively, deep ReLU$^k$ networks have the capability to represent higher-degree polynomials precisely.
		Our initial contribution is a comprehensive constructive proof for polynomial representation using deep ReLU$^k$ networks. This allows us to establish an upper bound on both the size and count of network parameters. Consequently, we are able to demonstrate a suboptimal approximation rate for functions from Sobolev spaces as well as for analytic functions. Additionally, through an exploration of the representation power of deep ReLU$^k$ networks for shallow networks, we reveal that deep ReLU$^k$ networks can approximate functions from a range of variation spaces, extending beyond those generated solely by the ReLU$^k$ activation function. This finding demonstrates the adaptability of deep ReLU$^k$ networks in approximating functions within various variation spaces.
	\end{abstract}

	\section{Introduction}
	Over the past decade, neural networks have achieved remarkable success in a variety of fields, including image recognition \cite{lecun1998gradient, krizhevsky2012imagenet, he2016deep, he2019mgnet}, speech recognition \cite{mikolov2011strategies, hinton2012deep, sainath2013improvements}, and natural language processing \cite{collobert2011natural, bordes2014question, jean2014using, sutskever2014sequence}, among others. These achievements have sparked increased interest in the theoretical understanding of neural networks.
	Theoretically, the generalization error of a neural network can be decomposed into the sum of the approximation error and the sampling error, as detailed in works such as \cite{cucker2007learning}. In practice, the reduction of approximation error depends on carefully designed neural network architectures, whereas the control of sample error requires sufficient and reliable data resources. 
	For instance, the approximation capabilities of neural networks with sigmoid activation functions were explored in various works \cite{cybenko1989approximation, hornik1989multilayer, barron1993universal, mhaskar1995degree}. Additionally, the universality of neural networks with non-polynomial activation functions was established in \cite{leshno1993multilayer,chui1992approximation,mhaskar1993approximation}.
	In recent years, due to their simple computational form and the potential to overcome the gradient vanishing problem, neural networks employing the Rectified Linear Unit (ReLU, defined as $\text{ReLU}(x) = \max\{0,x\}$) and ReLU$^k = (\text{ReLU})^k$ activation functions have gained widespread use. 
	This paper focuses on the expressivity and approximation properties of neural networks with ReLU$^k$ activation functions.

	Given the fact that any ReLU networks are continuous piecewise linear (CPwL) functions, \cite{arora2018understanding} initially demonstrated that any CPwL functions on $[0,1]^d$ can be represented by deep ReLU networks with $\lceil \log_2(d+1)\rceil$ hidden layers. Subsequently, \cite{he2020relu} established that shallow ReLU networks cannot represent any piecewise linear function on $[0,1]^d$ with $d\ge2$. In contrast, deep ReLU networks with more layers (but fewer parameters) can recover any linear finite element functions. Recently, \cite{he2023optimal} explored the optimal representation expressivity for continuous piecewise linear functions using deep ReLU networks on $[0,1]$. By integrating this expressivity with the Kolmogorov Superposition Theorem \cite{kolmogorov1957representation}, a significantly improved approximation rate was achieved. 
	Most recently, through a detailed examination of the basis functions and high-dimensional simplicial mesh characteristics of Lagrange finite element functions, which are a special type of piecewise polynomial functions, the authors in \cite{he2023deep} demonstrated that Lagrange finite element functions of any order in arbitrary dimensions can be represented by DNNs with ReLU or ReLU$^2$ activation functions.
	
	On the other hand, the approximation theory of shallow and deep neural networks with ReLU and ReLU$^k$ activation functions has been studied from various aspects. 
	For functions from the Barron space, it has been proved in \cite{klusowski2018approximation} that shallow ReLU networks uniformly with the rate $\mathcal{O}\left(N^{-\frac{1}{2}-\frac{1}{d}}\right)$ with $\mathcal{O}(N)$ parameters. This result has been generalized to $L^p$-approximation, spectral Barron spaces, and ReLU$^k$ activation functions in \cite{siegel2021optimal,siegel2022high,siegel2022sharp,ma2022uniform}. In particular, a optimal rate $\tilde{\mathcal{O}}(N^{-\frac{1}{2}-\frac{2k+1}{2d}})$ for functions from the so-called spectral Barron spaces, where $\tilde{\mathcal{O}}$ indicates up to a constant power of $\log N$. Furthermore, for $\alpha$-H\"older functions, \cite{mao2023rates} proved the approximation rate $\tilde{\mathcal{O}}\left(N^{-\frac{\alpha}{d}\frac{d+2}{d+4}}\right)$ by shallow ReLU networks. This result is generalized to shallow ReLU$^k$ networks for $k=0,1,2,\dots$ in \cite{yang2023optimal}. In particular, for $k=0,1$, the rate is improved to the optimal rate $\mathcal{O}\left(N^{-\frac{\alpha}{d}}\right)$.
	Compared to shallow neural networks, deep network architectures have significant advantages as demonstrated in \cite{chui1996limitations,delalleau2011shallow,safran2016depth,eldan2016power}. Specifically, by using a bit-extraction technique, it has been proved that fully-connected deep ReLU networks with depth $L$ and width $N$ attain the optimal approximation rate $\tilde{\mathcal{O}}\left(N^2L^2\right)$ \cite{shen2019deep,shen2022optimal,siegel2022optimal}. 
	
	For functions with specific compositional structures, deep ReLU networks can achieve significantly better expressivity power or approximation accuracy than their shallow ReLU counterparts \cite{poggio2017and,song2023approximation,mao2023approximating}. 
	In particular, it has been shown that any function constructed by a shallow ReLU neural network can be represented by a deep ReLU neural network with width $d+4$ \cite{lu2017expressive} or a deep convolutional neural network whenever the filter size $\geq2$ \cite{zhou2020universality,he2022approximation}.
	In addition, it was demonstrated in \cite{telgarsky2015representation} that sawtooth functions on $[0,1]$ with $2^N$ oscillations can be represented by a deep ReLU network with $\mathcal{O}(N)$ parameters. In contrast, a shallow ReLU network would require $\mathcal{O}(2^N)$ parameters to construct such a function. Subsequent studies in \cite{lu2017expressive,yarotsky2017error} showed that multivariate polynomials could be approximated by deep ReLU networks exponentially, achieving an approximation rate of $\tilde{\mathcal{O}}(N^{-\frac{\alpha}{d}})$ for functions from $C^\alpha(\BB^d)$. Later, a further study of approximating holomorphic functions by deep ReLU networks has been introduced in \cite{opschoor2022exponential}. Additionally, \cite{montanelli2019new} demonstrated that the sparse grid basis can be exponentially approximated by deep ReLU networks, leading to a suboptimal approximation rate of $\tilde{\mathcal{O}}(N^{-2})$ for functions from Korobov spaces. However, a foundational aspect of these results is a special expressivity property of deep ReLU networks. 
	A more in-depth and systematic analysis of this result, from the hierarchical basis perspective, is presented in \cite{he2022relu}.
	
	Given these findings, it is reasonable to anticipate that deep ReLU$^k$ networks might also offer superior representation and approximation properties compared to their shallow counterparts and deep ReLU networks. However, the expressivity and approximation properties of neural networks exclusively using the ReLU$^k$ activation function are not extensively documented in the literature.
	For example, it has been noted in studies such as \cite{chen2022power} and \cite{xu2020finite} that ReLU$^k$ networks can represent any global polynomial, with existence proofs provided to support this claim. Furthermore, the authors in \cite{li2019better} constructed a deep ReLU$^2$ network with $\mathcal{O}(n^d)$ neurons, demonstrating its capability to represent all polynomials of degree $\leq n$ on $\mathbb R^d$.
	In this paper, we generalize this to all ReLU$^k$ networks, providing a constructive proof for polynomial representation and estimating the upper bound of parameter size and count. Specifically, we show that shallow ReLU$^k$ networks with $\mathcal{O}(k^{d})$ parameters can represent polynomials of degree at most $k$ in $\mathbb R^d$, and that deep ReLU$^k$ networks of depth $L$ with $\mathcal{O}(k^{L d})$ parameters can represent polynomials of degree $\leq k^L$. Moreover, deep ReLU$^k$ networks can approximate analytic functions and functions from Sobolev spaces with suboptimal rates. Additionally, we prove that deep ReLU$^k$ networks with depth $L$ and width $2(k+1)n$ can represent functions constructed by any shallow ReLU$^{k^{\ell}}$ networks of width $n$ for any $\ell \le L$. Together with recent results on variation spaces \cite{siegel2022sharp}, we demonstrate an approximation error of $\mathcal{O}(N^{-\frac{1}{2}-\frac{2k^\ell+1}{2d}})$ for functions from the variation space $\mathcal{K}_1(\mathbb{P}_{k^\ell}^d)$ for any $\ell\leq L$. This underscores the adaptability of deep ReLU$^k$ networks in approximating functions from variation spaces without precise knowledge of their regularity, providing new insights into the advantages of deep architectures with ReLU$^k$ activations.
	
	The rest of this section will formally define our ReLU$^k$ network architectures, including bounds on parameter size and count. Section~\ref{sec:RepPoly} presents the constructive proof for global polynomial representation by ReLU$^k$ networks and discusses their approximation properties for analytic functions and functions from Sobolev spaces. Section~\ref{sec:AdaVaration} explores the adaptive approximation properties of deep ReLU$^k$ networks for functions in variation spaces. Section~\ref{sec:conclusions} offers concluding remarks and reflections on our findings.

	\begin{definition}
		Let $k\in\NN$, then we denote the univariate function {\rm ReLU}$^k$ as 
		\begin{equation}
			\sigma_k(x)=\left\{\begin{array}{ll}
				x^k, &\quad x\geq0,  \\
				0, &\quad x<0. 
			\end{array}\right.
		\end{equation}
		In particular, the {\rm ReLU} function is the {\rm ReLU}$^k$ function with $k=1$.
		
		For any vector $v=(v_1,\dots,v_n)$, we abuse the notation and denote the vector mapping
		\begin{equation}
			\sigma_k(v)=(\sigma_k(v_1),\dots,\sigma_k(v_n)) \in \mathbb R^n.
		\end{equation}
	\end{definition}

	\begin{definition}[ReLU$^k$ networks]\label{def:ReLUk}
		Let $d,k,n\in\NN^+$, $B>0$, $\BB^d$ be the unit ball of $\RR^d$ centered at the origin. Denote $\Sigma_{n}^k(B)$ be the class of shallow {\rm ReLU}$^k$ network on $\BB^d$
		\begin{equation}\label{eqn:defshallowReLUk}
			\Sigma_{n}^k(B)=\left\{x\mapsto c\cdot\sigma_k(Ax+b):\ A\in\RR^{n\times d},\ b\in\RR^n,\ c\in\RR^n,\ \|A\|_{\max}\leq B,\ \|b\|_\infty\leq B\right\}.
		\end{equation}
		where
		$$\|A\|_{\max}:=\max_{\substack{1\leq i\leq n\\1\leq j\leq d}}|(A)_{ij}|.$$
		In addition, if we suppose further in \eqref{eqn:defshallowReLUk} that $\|c\|_\infty<M$ for some $M>0$, we will denote the set as $\Sigma_{n,M}^k(B)$. This coincides with the notation in the previous works \cite{siegel2021optimal,siegel2022high,siegel2022sharp}.
		
		The class of fully-connected deep {\rm ReLU}$^k$ networks with depth $L$ and width $\{n_i\}_{i=1}^{L}$ is defined inductively as
		\begin{equation}\label{eqn:def_deepnet}
			\begin{split}
				\Sigma_{n_{1:L}}^k(B)=\{x\mapsto c\cdot h_{L}(x):&\ h_{i+1}(x)=\sigma_k\left(A_{i}h_i(x)+b_i\right),\ A_{i}\in\RR^{n_{i}\times n_{i-1}},\ b_i\in\RR^{n_{i}},\ c\in\RR^{n_{L}},\\
				&\ \|A_{i}\|_{\max}\leq B,\ \|b_i\|_\infty\leq B, i=1,\dots,L\},
			\end{split}
		\end{equation}
		where $n_0=d$, $h_0(x)=x$.
		
		Similarly, if we further require the weights in the output layers to be bounded as $\|c\|_\infty\leq M$, then we denote the class as $\Sigma_{n_{1:L},M}^k(B)$.
	\end{definition}

	Before introducing our results, we specify that the number of parameters in the fully connected deep ReLU$^k$ network is $n_L+\sum\limits_{i=1}^{L}n_{i}(n_{i-1}+1)$. However, deep neural network structures are often considered to be sparse (e.g.,\cite{chui2019deep,montanelli2019new,mao2021theory,fang2020theory}), i.e., most of the parameters are fixed to be $0$. Such a network is called a sparse deep network. The sparsity remarkably decreases the number of parameters and, thus, the computation cost. The deep ReLU$^k$ networks we constructed in this paper are sparse neural networks.

	%The approximation property of shallow ReLU and ReLU$^2$ networks has been studied in \cite{klusowski2018approximation} for functions from the spectral Barron spaces. For shallow ReLU$^k$ networks with arbitrary $k\in\NN$ and the corresponding variation spaces, the approximation properties have been studied in \cite{siegel2022high,ma2022uniform,siegel2022sharp}.

	\section{Representing Polynomials}\label{sec:RepPoly}
	In this section, we construct the ReLU$^k$ neural networks that represent polynomials. We will first show how shallow ReLU$^k$ networks represent polynomials of degree bounded by $k$, and extend the result to deep networks. As a consequence, we obtain suboptimal approximation rates for analytic functions and functions from Sobolev spaces.
	
	The following lemma shows how monomials with the form $x^\alpha$ of degree $n$ are constructed by $n$-th powers of linear combinations of $(x_1,\dots,x_d)$ (see, e.g., \cite{chui1992approximation,mhaskar1993approximation}). It has been known for many years that the homogeneous polynomial space of degree $n$ is a span of such linear combinations. However, our following lemma construct it explicitly, which allows an estimation of the coefficients.
	\begin{lemma}\label{lem:represent_poly}
		Let $n,d\in\NN^+$, and $\alpha=(\alpha_1,\dots,\alpha_d)\in\NN^d$ with
		$$\alpha_1+\dots+\alpha_d=n,$$
		then the monomial $x^\alpha$ can be written as a linear combination
		\begin{equation}\label{eqn:monomial_comb}
			x^\alpha=\sum\limits_{n_2,\dots,n_d=-\left\lfloor\frac{n}{2}\right\rfloor}^{n-\left\lfloor\frac{n}{2}\right\rfloor}c_{n_2,\dots,n_d}\left(x_1+n_2x_2+\dots+n_dx_d\right)^{n}
		\end{equation}
		with each $c_{n_2,\dots,n_d}\in\left[-\left(\frac{n}{2}+1\right)^{2d},\left(\frac{n}{2}+1\right)^{2d}\right]$.
	\end{lemma}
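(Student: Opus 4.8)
The plan is to expand the linear forms on the right-hand side by the multinomial theorem and match coefficients. For indeterminates $t_2,\dots,t_d$ one has
\[
(x_1+t_2x_2+\dots+t_dx_d)^n=\sum_{|\beta|=n}\binom{n}{\beta}\,t_2^{\beta_2}\cdots t_d^{\beta_d}\,x^\beta ,
\]
so the coefficient of $t_2^{\alpha_2}\cdots t_d^{\alpha_d}$ in this expression, viewed as a polynomial in $t_2,\dots,t_d$, is exactly $\binom{n}{\alpha}x^\alpha$ (using $\alpha_1=n-\alpha_2-\dots-\alpha_d$). Hence it suffices to produce numbers $\{\tilde c_{n_2,\dots,n_d}\}$ indexed by the grid $G=\{-\lfloor n/2\rfloor,\dots,n-\lfloor n/2\rfloor\}^{d-1}$ with $\sum_{\vec n\in G}\tilde c_{\vec n}\,n_2^{j_2}\cdots n_d^{j_d}=\delta_{j_2\alpha_2}\cdots\delta_{j_d\alpha_d}$ for all $0\le j_i\le n$; then $c_{\vec n}=\tilde c_{\vec n}/\binom{n}{\alpha}$ works, since substituting this back into the multinomial identity collapses the sum over $|\beta|=n$ to the single term $\beta=\alpha$.

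This linear system is a tensor product of $d-1$ identical univariate Vandermonde systems on the $n+1$ distinct nodes $\{-\lfloor n/2\rfloor,\dots,n-\lfloor n/2\rfloor\}$, hence uniquely solvable, and its solution is explicit: $\tilde c_{n_2,\dots,n_d}=\prod_{i=2}^{d}w^{(\alpha_i)}_{n_i}$, where $w^{(\alpha)}_{m}$ is the coefficient of $t^{\alpha}$ in the Lagrange cardinal polynomial $\ell_m(t)=\prod_{m'\ne m}\frac{t-m'}{m-m'}$ for that node set $S$ (equivalently, these weights can be written through finite differences and Stirling numbers of the first kind). In closed form, $w^{(\alpha)}_{m}=\dfrac{(-1)^{n-\alpha}e_{n-\alpha}(S\setminus\{m\})}{\prod_{m'\ne m}(m-m')}$, with $e_j$ the $j$-th elementary symmetric polynomial; this is the promised explicit construction.

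The hard part is the coefficient bound $|c_{\vec n}|=\binom{n}{\alpha}^{-1}\prod_{i}|w^{(\alpha_i)}_{n_i}|\le\bigl(\tfrac n2+1\bigr)^{2d}$. A termwise estimate — bounding $e_{n-\alpha}(S\setminus\{m\})$ by $\binom{n}{\alpha}\lceil n/2\rceil^{n-\alpha}$ and the denominator below by the middle factorials — is far too lossy and yields a bound exponential in $n$; the point is that the Lagrange weights are in fact much smaller, because $S$ is (nearly) symmetric about the origin, so that the numerator and denominator exhibit massive cancellation, or, equivalently, because $[t^\alpha]\ell_m$ can be controlled by a Cauchy estimate on the circle $|t|=\tfrac n2+1$ (presumably the source of that radius), together with the harmless normalization $\binom{n}{\alpha}^{-1}\le1$. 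A clean way to organize the estimate is induction on $d$: write $x^\alpha=x_d^{\alpha_d}\cdot\bigl(x_1^{\alpha_1}\cdots x_{d-1}^{\alpha_{d-1}}\bigr)$, apply the statement in dimension $d-1$ and degree $n-\alpha_d$ to the second factor, then reduce each resulting product $x_d^{\alpha_d}\,(\text{linear form})^{\,n-\alpha_d}$ to the bivariate case $d=2$; the node ranges nest because $\lceil(n-\alpha_d)/2\rceil\le\lceil n/2\rceil$, and the induction closes with the exact exponent $2d$ as soon as the bivariate weights are shown to satisfy $|w|\le(\tfrac n2+1)^2$. Thus the genuine obstacle is confined to this last estimate: exhibiting the cancellation (or, equivalently, the bivariate bound) that converts the naive exponential-in-$n$ bound into the stated polynomial one.
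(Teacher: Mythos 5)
Your setup---reducing the problem to inverting (a tensor product of) univariate Vandermonde systems on the integer nodes $S=\{-\lfloor n/2\rfloor,\dots,n-\lfloor n/2\rfloor\}$, with the coefficients given by Lagrange cardinal weights $w_m^{(\alpha)}=[t^\alpha]\ell_m(t)$---is exactly the skeleton of the paper's proof (the paper organizes it as an induction on $d$, peeling off one variable per layer, which is the variant you sketch at the end). But the proof is not complete: you explicitly defer the one quantitative statement that the lemma is actually about, namely the bound $|w_m^{(\alpha)}|\le(\tfrac n2+1)^2$ on these weights, calling it ``the genuine obstacle.'' Without that estimate there is no bound on $c_{n_2,\dots,n_d}$ at all, so the lemma is not proved. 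That is a genuine gap, not a routine omission.

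Moreover, your diagnosis of the obstacle is off: no cancellation and no Cauchy estimate are needed. The bound follows from the triangle inequality applied to the explicit formula you already wrote down. Since $|e_{n-\alpha}(S\setminus\{m\})|\le e_{n-\alpha}\bigl(\{|m'|\}_{m'\ne m}\bigr)$ and $\sum_{j=0}^{n}e_{j}\bigl(\{|m'|\}_{m'\ne m}\bigr)=\prod_{m'\ne m}(1+|m'|)$, one gets
\begin{equation*}
\sum_{\alpha=0}^{n}\bigl|w_m^{(\alpha)}\bigr|\;\le\;\frac{\prod_{m'\ne m}(1+|m'|)}{\prod_{m'\ne m}|m-m'|},
\end{equation*}
and for the nearly symmetric integer nodes $S$ the right-hand side is maximized at the central node $m=0$, where it telescopes to $(\lfloor n/2\rfloor+1)(\lceil n/2\rceil+1)\le(\tfrac n2+1)^2$. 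This is precisely Gautschi's bound on $\|V^{-1}\|_\infty$ for Vandermonde matrices, which is what the paper invokes. The ``termwise estimate'' you dismiss as exponentially lossy is lossy only because you replaced $e_{n-\alpha}(S\setminus\{m\})$ by $\binom{n}{\alpha}\lceil n/2\rceil^{n-\alpha}$; keeping the product $\prod_{m'\ne m}(1+|m'|)$, which is nearly cancelled by the denominator $\prod_{m'\ne m}|m-m'|$ for nodes this spread out, gives the polynomial bound directly. Once this computation (or the citation to Gautschi) is supplied, the rest of your argument is sound---the tensor-product solution, the normalization by $\binom{n}{\alpha}^{-1}\le1$, and the nesting of node ranges in the inductive variant all check out, and the direct tensor-product route would even yield the slightly better exponent $2(d-1)$.
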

	
	\begin{proof}%[Proof of Lemma \ref{lem:represent_poly}]
		We prove this inductively on $d$. Clearly, this is true for $d=1$. We assume the result is true for $d-1$, thus
		\begin{equation}\label{eqn:indu_hypoth}
			x_1^{\alpha_1}\dots x_{d-1}^{\alpha_{d-1}}=\sum\limits_{n_2,\dots,n_{d-1}=-\left\lfloor\frac{j}{2}\right\rfloor}^{j-\left\lfloor\frac{j}{2}\right\rfloor}c_{n_2,\dots,n_{d-1}}\left(x_1+n_2x_2+\dots+n_{d-1}x_{d-1}\right)^{j},
		\end{equation}
		where $j=n-\alpha_d$ and $c_{n_2,\dots,n_{d-1}}\in\left[-\left(\frac{n}{2}+1\right)^{2(d-1)},\left(\frac{n}{2}+1\right)^{2(d-1)}\right]$.
		
		Let $\xi(x)$ be any function of $x$ and consider $(\xi(x)+n_dx_d)^n$ for $n_d=-\left\lfloor\frac{n}{2}\right\rfloor,\dots,n-\left\lfloor\frac{n}{2}\right\rfloor$, then we can write
		$$(\xi(x)+n_dx_d)^n=\sum\limits_{i=0}^n\binom{n}{i}n_d^{n-i}\xi(x)^ix_d^{n-i}.$$
		Consequently, for any $b=(b_0,\dots,b_{n})^T \in\mathbb{R}^{n+1}$, we have
		\begin{equation}\label{eqn:matix_represent_poly}
			\sum\limits_{s=0}^{n}b_s\left[\xi(x)+\left(s-\left\lfloor\frac{n}{2}\right\rfloor\right)x_d \right]^n=b^TB_n\lambda
		\end{equation}
		where $\lambda=\left(\xi(x)^n,x_d\xi(x)^{n-1},\dots,x_d^{n-1}\xi(x),x_d^n\right)^\top$, and
		\begin{equation}\label{eqn:A_explicit}
			\begin{split}
				B_n=&\left[\begin{array}{cccc}
					\binom{n}{0}1&\binom{n}{1}\left(-\left\lfloor\frac{n}{2}\right\rfloor\right)&\dots&\binom{n}{n}\left(-\left\lfloor\frac{n}{2}\right\rfloor\right)^n \\
					\binom{n}{0}1&\binom{n}{1}\left(1-\left\lfloor\frac{n}{2}\right\rfloor\right)&\dots&\binom{n}{n}\left(1-\left\lfloor\frac{n}{2}\right\rfloor\right)^n \\
					\vdots&\vdots&\ddots&\vdots \\
					\binom{n}{0}1&\binom{n}{1}\left(n-\left\lfloor\frac{n}{2}\right\rfloor\right)&\dots&\binom{n}{n}\left(n-\left\lfloor\frac{n}{2}\right\rfloor\right)^n
				\end{array}\right]\\
				=&\left[\begin{array}{cccc}
					1&\left(-\left\lfloor\frac{n}{2}\right\rfloor\right)&\dots&\left(-\left\lfloor\frac{n}{2}\right\rfloor\right)^n \\
					1&\left(1-\left\lfloor\frac{n}{2}\right\rfloor\right)&\dots&\left(1-\left\lfloor\frac{n}{2}\right\rfloor\right)^n \\
					\vdots&\vdots&\ddots&\vdots \\
					1&\left(n-\left\lfloor\frac{n}{2}\right\rfloor\right)&\dots&\left(n-\left\lfloor\frac{n}{2}\right\rfloor\right)^n
				\end{array}\right]\left[\begin{array}{cccc}
					\binom{n}{0}&0&\dots&0\\
					0&\binom{n}{1}&\dots&0\\
					\vdots&\vdots&\ddots&\vdots \\
					0&0&\dots&\binom{n}{n}
				\end{array}\right].
			\end{split}
		\end{equation}
		Clearly, $B_n$ is invertible. Let $\hat b=e_{n-j+1}^TB_n^{-1}$ with
		$$e_{n-j+1}=\left(\underbrace{0,\dots,0}_{n-j},1,\underbrace{0,\dots,0}_{j}\right)^T$$
		and $\xi(x)=x_1+n_2x_2+\dots+n_{d-1}x_{d-1}$, then we have
		\begin{equation*}
			\begin{split}
				&\sum\limits_{s=0}^{n}\hat b_s^T\left[x_1+n_2x_2+\dots+n_{d-1}x_{d-1}+\left(s-\left\lfloor\frac{n}{2}\right\rfloor\right)x_d\right]^n=\hat b^T B_n\lambda=e_{n-j+1}\lambda=\xi(x)^jx_d^{n-j}\\
				=&(x_1+n_2x_2+\dots+n_{d-1}x_{d-1})^jx_d^{\alpha_d}.
			\end{split}
		\end{equation*}
		
		Substituting \eqref{eqn:indu_hypoth}, we get
		\begin{equation*}
			\begin{split}
				&\sum\limits_{n_2,\dots,n_{d-1}=-\left\lfloor\frac{j}{2}\right\rfloor}^{j-\left\lfloor\frac{j}{2}\right\rfloor}c_{n_2,\dots,n_{d-1}}\sum\limits_{s=0}^n\hat b_s\left[x_1+n_2x_2+\dots+n_{d-1}x_{d-1}+\left(s-\left\lfloor\frac{n}{2}\right\rfloor\right)x_d\right]^n\\
				=&\sum\limits_{n_2,\dots,n_{d-1}=-\left\lfloor\frac{j}{2}\right\rfloor}^{j-\left\lfloor\frac{j}{2}\right\rfloor}c_{n_2,\dots,n_{d-1}}(x_1+n_2x_2+\dots+n_{d-1}x_{d-1})^jx_d^{\alpha_d}\\
				=&x_1^{\alpha_1}\dots x_{d-1}^{\alpha_{d-1}}x_d^{\alpha_d}.
			\end{split}
		\end{equation*}
		For $n_2,\dots,n_d\in\left\{-\left\lfloor\frac{n}{2}\right\rfloor,\dots,n-\left\lfloor\frac{n}{2}\right\rfloor\right\}$, denote
		$$c_{n_2,\dots,n_{d}}:=\left\{\begin{array}{ll}
			c_{n_2,\dots,n_{d-1}}\hat b_{n_d+\left\lfloor\frac{n}{2}\right\rfloor}, &\quad  n_2,\dots,n_{d-1}\in\left\{-\left\lfloor\frac{j}{2}\right\rfloor,\dots,j-\left\lfloor\frac{j}{2}\right\rfloor\right\}\\
			0, & \quad\hbox{otherwise,}
		\end{array}\right.$$
		then
		$$\sum\limits_{n_2,\dots,n_d=-\left\lfloor\frac{n}{2}\right\rfloor}^{n-\left\lfloor\frac{n}{2}\right\rfloor}c_{n_2,\dots,n_d}\left(x_1+n_2x_2+\dots+n_dx_d\right)^{n}=x_1^{\alpha_1}\dots x_d^{\alpha_d}.$$
		
		Finally, let us consider the bounds of $c_{n_2,\dots,n_d}$. Here, we only need to estimate the bounds of $\hat b_{s}$. Since $\hat b=e_{n-j+1}^TB_n^{-1}$, for $s=0,\dots,n$, we have $|b_s|\leq\|B_n^{-1}\|_{\max}=\max\limits_{1\leq i,k\leq n+1}|(B_n^{-1})_{ik}|$.
		
		We can bound $\|B_n^{-1}\|_{\max}$ by the norm of the inverse of the Vandermonde matrix in \eqref{eqn:A_explicit}. Then by \cite[Theorem 1]{gautschi1978inverses}, we have
		\begin{equation}\label{eqn:Bn_inverse_bound}
			\begin{split}
				\|B_n^{-1}\|_{\max}\leq&\frac{\prod\limits_{s=0}^n\left(1+\left|s-\left\lfloor\frac{n}{2}\right\rfloor\right|\right)}{\min\limits_{0\leq i\leq n}\left\{(1+\left|i-\left\lfloor\frac{n}{2}\right\rfloor\right|)\prod\limits_{s\neq i}|s-i|\right\}}=\frac{\prod\limits_{s=0}^n\left(1+\left|s-\left\lfloor\frac{n}{2}\right\rfloor\right|\right)}{(1+\left|0\right|)\prod\limits_{s\neq \left\lfloor\frac{n}{2}\right\rfloor}\left|s-\left\lfloor\frac{n}{2}\right\rfloor\right|}\\
				=&\left(\prod\limits_{m=1}^{n-\left\lfloor\frac{n}{2}\right\rfloor}\frac{1+m}{m}\right)\times1\times\left(\prod\limits_{m=1}^{\left\lfloor\frac{n}{2}\right\rfloor}\frac{1+m}{m}\right)\leq\left(\frac{n}{2}+1\right)^2.
			\end{split}
		\end{equation}
		Thus
		$$|c_{n_2,\dots,n_d}|\leq|b_s||c_{n_2,\dots,n_{d-1}}|\leq\left(\frac{n}{2}+1\right)^2\left(\frac{n}{2}+1\right)^{2(d-1)}\leq\left(\frac{n}{2}+1\right)^{2d}.$$
		This completes the proof of Lemma \ref{lem:represent_poly}.
	\end{proof}
	
	By noticing that
	$$t^k=\sigma_k(t)+(-1)^k\sigma_k(-t),\quad t\in\RR,$$
	Lemma \ref{lem:represent_poly} enables the representation of the polynomials of degree $\leq k$ by shallow ReLU$^k$ neural networks, which is also the key to showing the expressively of deep ReLU$^k$ neural networks.
	
	\begin{remark}
		In fact, the choice of $n_2,\dots,n_d$ in the proof of Lemma \eqref{lem:represent_poly} is not the only choice. From \eqref{eqn:A_explicit}, we can observe that our proof relies on writing $B_n$ as a product Vandermonde matrix and a diagonal matrix. The attempt to lessen the bound \eqref{eqn:Bn_inverse_bound} resulted in our choice of $n_2,\dots,n_d$. However, we specify that our choice does not necessarily lead to the optimal upper bound $M$ in the next lemma.
	\end{remark}

	\begin{lemma}\label{lem:shallow_poly}
		Let $d,k\in\NN^+$, $k\geq2$, $N=2(k+1)^d$, and $B>0$, then any polynomial
		\begin{equation*}
			P(x)=\sum\limits_{\{\alpha\in\NN^d:\ \|\alpha\|_1\leq k\}}a_\alpha x^\alpha,\ x\in\BB^d
		\end{equation*}
		with degree $\leq k$ can be represented by the shallow {\rm ReLU}$^k$ network $\Sigma_{N,M}^k(B)$, where
		$$M=B^{-k}\left(\frac{k}{2}+1\right)^{2(d+1)+k}\left(\sum\limits_{\{\alpha\in\NN^d:\ \|\alpha\|_1\leq k\}}|a_\alpha|\right).$$
	\end{lemma}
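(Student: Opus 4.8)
The plan is to write $P$ as a linear combination of $k$-th powers of affine functions of $x$, expand each such power into a pair of ReLU$^k$ neurons via the identity $t^k=\sigma_k(t)+(-1)^k\sigma_k(-t)$ recalled after Lemma~\ref{lem:represent_poly}, and then rescale the affine forms so that all hidden weights and biases obey the bound $B$, keeping track of the induced blow-up in the output weights.

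First, a monomial $x^\alpha$ with $\|\alpha\|_1=n\le k$ is not homogeneous of degree $k$, so Lemma~\ref{lem:represent_poly} does not apply to it directly. I would homogenize it by introducing an auxiliary variable $z_0$ and regarding $z_0^{\,k-n}x^\alpha$ as a monomial of degree exactly $k$ in the $d+1$ variables $(z_0,x_1,\dots,x_d)$, with $z_0$ placed in the role of the \emph{first} variable. Applying Lemma~\ref{lem:represent_poly} in dimension $d+1$ then gives
\begin{equation*}
z_0^{\,k-n}x^\alpha=\sum_{n_1,\dots,n_d=-\lfloor k/2\rfloor}^{\,k-\lfloor k/2\rfloor}c^{(\alpha)}_{n_1,\dots,n_d}\bigl(z_0+n_1x_1+\dots+n_dx_d\bigr)^k,\qquad\bigl|c^{(\alpha)}_{n_1,\dots,n_d}\bigr|\le\Bigl(\tfrac k2+1\Bigr)^{2(d+1)}.
\end{equation*}
Specializing $z_0=1$ and writing $\bm n=(n_1,\dots,n_d)$, this becomes $x^\alpha=\sum_{\bm n}c^{(\alpha)}_{\bm n}\,\ell_{\bm n}(x)^k$ with the affine forms $\ell_{\bm n}(x):=1+n_1x_1+\dots+n_dx_d$. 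The decisive feature is that the index set $\{-\lfloor k/2\rfloor,\dots,k-\lfloor k/2\rfloor\}^d$ for $\bm n$ does \emph{not} depend on $\alpha$, so summing against the coefficients $a_\alpha$ of $P$ yields
\begin{equation*}
P(x)=\sum_{\bm n}C_{\bm n}\,\ell_{\bm n}(x)^k,\qquad C_{\bm n}=\sum_{\|\alpha\|_1\le k}a_\alpha c^{(\alpha)}_{\bm n},\qquad|C_{\bm n}|\le\Bigl(\tfrac k2+1\Bigr)^{2(d+1)}\!\!\sum_{\|\alpha\|_1\le k}|a_\alpha|,
\end{equation*}
a sum over exactly $(k+1)^d$ multi-indices $\bm n$.

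Next, every $\ell_{\bm n}$ has linear coefficients and constant term of modulus at most $\lceil k/2\rceil\le k/2+1$, which may exceed $B$. Using the positive homogeneity $\sigma_k(\lambda t)=\lambda^k\sigma_k(t)$ for $\lambda\ge0$, I would factor $\ell_{\bm n}=\rho\,\widetilde\ell_{\bm n}$ with $\rho:=(k/2+1)/B$, so that $\widetilde\ell_{\bm n}$ has all coefficients and constant term bounded by $B$ in modulus, and
\begin{equation*}
\ell_{\bm n}(x)^k=\rho^k\bigl(\sigma_k(\widetilde\ell_{\bm n}(x))+(-1)^k\sigma_k(-\widetilde\ell_{\bm n}(x))\bigr).
\end{equation*}
Collecting these identities over all $(k+1)^d$ multi-indices produces a shallow ReLU$^k$ network with $2(k+1)^d=N$ neurons, hidden weights and biases of max-norm $\le B$, and output weights $\pm\rho^kC_{\bm n}$, each bounded by $\rho^k(\tfrac k2+1)^{2(d+1)}\sum|a_\alpha|=B^{-k}(\tfrac k2+1)^{2(d+1)+k}\sum|a_\alpha|=M$. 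Thus $P\in\Sigma^k_{N,M}(B)$.

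Once Lemma~\ref{lem:represent_poly} is available this is essentially a substitution, and I expect the only delicate points to be the two bits of bookkeeping mentioned above: (i) placing the homogenizing variable $z_0$ first, which is exactly what makes the family $\{\ell_{\bm n}\}$ — and hence the neuron count $2(k+1)^d$ — independent of $\alpha$ rather than growing with the number of monomials of $P$; and (ii) combining the two multiplicative contributions to $M$, the factor $(\tfrac k2+1)^{2(d+1)}$ coming from the use of Lemma~\ref{lem:represent_poly} in dimension $d+1$ and the factor $\rho^k=B^{-k}(\tfrac k2+1)^k$ coming from the rescaling, so that the final exponent is $2(d+1)+k$.
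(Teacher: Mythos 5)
Your proposal is correct and follows essentially the same route as the paper: homogenize with an auxiliary variable, apply Lemma~\ref{lem:represent_poly} in dimension $d+1$, specialize that variable to $1$, split each $k$-th power into two neurons via $t^k=\sigma_k(t)+(-1)^k\sigma_k(-t)$, and rescale by $(\tfrac k2+1)/B$ to meet the weight bound, with the same final constant $M$. The only (immaterial) difference is that you place the homogenizing variable first while the paper appends it as $x_{d+1}$; either choice makes the family of $(k+1)^d$ affine forms independent of $\alpha$, so your remark that the first position is decisive is a slight overstatement but does not affect correctness.
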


	\begin{proof}%[Proof of Lemma \ref{lem:shallow_poly}]
		By applying Lemma \ref{lem:represent_poly} with $d\leftarrow d+1$ and $n\leftarrow k$, we conclude
		any monomial of degree $k$ can be represented as
		$$x_1^{\alpha_1}\dots x_{d}^{\alpha_d}x_{d+1}^{\alpha_{d+1}}=\sum\limits_{n_2,\dots,n_d=-\left\lfloor\frac{k}{2}\right\rfloor}^{k-\left\lfloor\frac{k}{2}\right\rfloor}c_{n_2,\dots,n_{d+1}}\left(x_1+n_2x_2+\dots+n_dx_d+n_{d+1}x_{d+1}\right)^{k}.$$
		Replacing $x_{d+1}$ by $1$, then all monomials of degree $\leq k$ on variables $(x_1,\dots,x_d)$ can be represented as
		\begin{equation}\label{eqn:comb_monomial}
			x_1^{\alpha_1}\dots x_{d}^{\alpha_d}=\sum\limits_{n_2,\dots,n_d=-\left\lfloor\frac{k}{2}\right\rfloor}^{k-\left\lfloor\frac{k}{2}\right\rfloor}c_{n_2,\dots,n_d}\left(x_1+n_2x_2+\dots+n_dx_d+n_{d+1}\right)^{k}
		\end{equation}
		with $c_{n_2,\dots,n_{d+1}}=c_{n_2,\dots,n_{d+1}}(\alpha)\in\left[-\left(\frac{k}{2}+1\right)^{2(d+1)},\left(\frac{k}{2}+1\right)^{2(d+1)}\right]$.
		Then any polynomial
		$$P(x)=\sum\limits_{\{\alpha\in\NN^d:\ \|\alpha\|_1\leq k\}}a_\alpha x^\alpha$$
		can be written as
		\begin{equation*}
			\begin{split}
				P(x)=&\sum\limits_{\{\alpha\in\NN^d:\ \|\alpha\|_1\leq k\}}a_\alpha\sum\limits_{n_2,\dots,n_{d+1}=-\left\lfloor\frac{k}{2}\right\rfloor}^{k-\left\lfloor\frac{k}{2}\right\rfloor}c_{n_2,\dots,n_{d+1}}(\alpha)\left(x_1+n_2x_2+\dots+n_dx_d+n_{d+1}\right)^{k}\\
				=&\sum\limits_{n_2,\dots,n_d=-\left\lfloor\frac{k}{2}\right\rfloor}^{k-\left\lfloor\frac{k}{2}\right\rfloor}\beta_{n_2,\dots,n_{d+1}}\left(x_1+n_2x_2+\dots+n_dx_d+n_{d+1}\right)^{k}
			\end{split}
		\end{equation*}
		where
		\begin{equation*}
			\begin{split}
				\beta_{n_2,\dots,n_{d+1}}=&\sum\limits_{\{\alpha\in\NN^d:\ \|\alpha\|_1\leq k\}}a_\alpha c_{n_2,\dots,n_{d+1}}(\alpha)\leq\sum\limits_{\{\alpha\in\NN^d:\ \|\alpha\|_1\leq k\}}|a_\alpha|\max\limits_{\{\alpha\in\NN^d:\ \|\alpha\|_1\leq k\}}|c_{n_2,\dots,n_{d+1}}(\alpha)|\\
				\leq&\left(\frac{k}{2}+1\right)^{2(d+1)}\left(\sum\limits_{\{\alpha\in\NN^d:\ \|\alpha\|_1\leq k\}}|a_\alpha|\right).
			\end{split}
		\end{equation*}
		
		Thus, the polynomial $P$ can be written as
		\begin{equation*}
			\begin{split}
				P(x)=\sum\limits_{n_2,\dots,n_d=-\left\lfloor\frac{k}{2}\right\rfloor}^{k-\left\lfloor\frac{k}{2}\right\rfloor}\beta_{n_2,\dots,n_{d+1}}&\left[\sigma_k\left(x_1+n_2x_2+\dots+n_dx_d+n_{d+1}\right)\right.\\
				&\ +\left.(-1)^k\sigma_k\left(-x_1-n_2x_2-\dots-n_dx_d-n_{d+1}\right)\right].
			\end{split}
		\end{equation*}
		By noticing the inequality $\sigma_k(ay)=a^k\sigma_k(y)$ for all $a>0$, $y\in\RR$, we can write $P$ as
		\begin{equation*}
			\begin{split}
				P(x)=\sum\limits_{n_2,\dots,n_d=-\left\lfloor\frac{k}{2}\right\rfloor}^{k-\left\lfloor\frac{k}{2}\right\rfloor}&B^{-k}\left\lceil\frac{k}{2}\right\rceil^{k}\beta_{n_2,\dots,n_{d+1}}\left[\sigma_k\left(B\left\lceil\frac{k}{2}\right\rceil^{-1}(x_1+n_2x_2+\dots+n_dx_d+n_{d+1})\right)\right.\\
				&\ +\left.(-1)^k\sigma_k\left(-B\left\lceil\frac{k}{2}\right\rceil^{-1}(x_1+n_2x_2+\dots+n_dx_d+n_{d+1})\right)\right].
			\end{split}
		\end{equation*}
		Clearly, the parameters satisfy
		\begin{equation*}
			\left|B\left\lceil\frac{k}{2}\right\rceil^{-1}\right|\leq B \quad \text{and}  \quad 
			\left|B\left\lceil\frac{k}{2}\right\rceil^{-1}n_j\right|\leq B
		\end{equation*}
		for all $j=2,\dots,d+1$, and
		$$
		\left|B^{-k}\left\lceil\frac{k}{2}\right\rceil^{k}\beta_{n_2,\dots,n_{d+1}}\right|\leq B^{-k}\left(\frac{k}{2}+1\right)^{2(d+1)+k}\left(\sum\limits_{\{\alpha\in\NN^d:\ \|\alpha\|_1\leq k\}}|a_\alpha|\right)
		$$
		for all $ -\left\lfloor\frac{k}{2}\right\rfloor\leq n_2,\dots,n_d\leq k-\left\lfloor\frac{k}{2}\right\rfloor$. This proves Lemma \ref{lem:shallow_poly}.
	\end{proof}

	Now, we are ready to prove our first main result, which reveals the way to construct polynomials by deep ReLU$^k$ neural networks, as well as an estimation of the upper bound of the parameters.
	
	\begin{theorem}\label{thm:poly_deep}
		Let $d,k,L\in\NN^+$, $B>0$, and $n_1=\dots=n_ L=2(k^ L+1)^d$, then any polynomial
		\begin{equation*}
			P(x)=\sum\limits_{\{\alpha\in\NN^d:\ \|\alpha\|_1\leq k^L\}}a_\alpha x^\alpha,\ x\in\BB^d
		\end{equation*}
		with degree $\leq k^L$ can be represented by a deep {\rm ReLU}$^k$ network architecture $\Sigma_{n_{1:L},M}^k(B)$ with
		$$M=B^{-k\frac{k^{L}-1}{k-1}}\left(\frac{k^L}{2}+1\right)^{2(d+1)+k^L}\left(\sum\limits_{\{\alpha\in\NN^d:\ \|\alpha\|_1\leq k^L\}}|a_\alpha|\right).$$
		Furthermore, there are $2(2L+d)(k^L+1)^d$ nonzero parameters in the neural network architecture.
	\end{theorem}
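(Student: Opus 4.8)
The plan is to split the argument into a horizontal reduction and a vertical construction. The horizontal part uses Lemma~\ref{lem:represent_poly} to write an arbitrary degree-$\le k^L$ polynomial as a sum of $(k^L+1)^d$ pure powers $\ell(x)^{k^L}$ of affine functions. The vertical part realizes each such power by a depth-$L$ sub-network of width $2$, obtained by iterating the two elementary identities
$$t^k=\sigma_k(t)+(-1)^k\sigma_k(-t)\quad\text{and}\quad \sigma_k(at)=a^k\sigma_k(t)\ \ (a>0),\qquad t\in\RR .$$
Placing the $(k^L+1)^d$ thin sub-networks in parallel then gives widths $2(k^L+1)^d$, a single linear output layer reassembles $P$, the scalings needed to keep every weight bounded by $B$ give $M$, and a block-by-block count gives the number of nonzero parameters.

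For the reduction I would argue exactly as in the proof of Lemma~\ref{lem:shallow_poly}: apply Lemma~\ref{lem:represent_poly} with $d\leftarrow d+1$ and $n\leftarrow k^L$, then substitute $x_{d+1}=1$. Every monomial of degree $\le k^L$ in $(x_1,\dots,x_d)$ is then a combination of $k^L$-th powers of the \emph{common} family of affine maps $\ell_{\mathbf n}(x)=x_1+n_2x_2+\cdots+n_dx_d+n_{d+1}$ with $\mathbf n=(n_2,\dots,n_{d+1})\in\{-\lfloor k^L/2\rfloor,\dots,k^L-\lfloor k^L/2\rfloor\}^d$ (a set of cardinality $(k^L+1)^d$), the coefficients outside an $\alpha$-dependent subrange being zero. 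Summing over $\alpha$ gives $P(x)=\sum_{\mathbf n}\beta_{\mathbf n}\,\ell_{\mathbf n}(x)^{k^L}$ with $|\beta_{\mathbf n}|\le(\tfrac{k^L}{2}+1)^{2(d+1)}\sum_{\|\alpha\|_1\le k^L}|a_\alpha|$, by the coefficient bound of Lemma~\ref{lem:represent_poly}, so it suffices to realize each $\ell_{\mathbf n}(x)^{k^L}$.

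For the vertical construction, fix an affine $\ell$ and build layers $j=1,\dots,L$, each with two neurons carrying $\sigma_k(\pm v_j)$ where $v_j=\mu_j\,\ell(x)^{k^{j-1}}$ and $\mu_j>0$. Since every coefficient of $\ell$ is $\le\lceil k^L/2\rceil$ in modulus, taking $\mu_1=B\lceil k^L/2\rceil^{-1}$ makes the first-layer weights admissible, and $\sigma_k(at)=a^k\sigma_k(t)$ gives that a linear combination of the two first-layer neurons equals $\mu_1^{k}\ell(x)^{k}$. Inductively, a linear combination of the two neurons of layer $j-1$ equals $\mu_{j-1}^{k}\ell(x)^{k^{j-1}}$; multiplying by $B$ (no bias needed, since the constant is already inside $\ell$) and feeding $\pm$ that into $\sigma_k$ produces layer $j$ with $\mu_j=B\,\mu_{j-1}^{k}$, i.e.\ $\mu_j=B^{(k^j-1)/(k-1)}\lceil k^L/2\rceil^{-k^{j-1}}$, and all layer-$j$ weights then have modulus $\le B$. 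By the first identity, a linear combination of the two top neurons equals $v_L^{k}=\mu_L^{k}\ell(x)^{k^L}$. Stacking these $(k^L+1)^d$ sub-networks gives $n_1=\cdots=n_L=2(k^L+1)^d$, and the output layer $\sum_{\mathbf n}\beta_{\mathbf n}\mu_L^{-k}\bigl(\sigma_k(v_L^{(\mathbf n)})+(-1)^k\sigma_k(-v_L^{(\mathbf n)})\bigr)$ equals $P(x)$; its weights have modulus $|\beta_{\mathbf n}|\mu_L^{-k}$, and plugging in $\mu_L^{-k}=B^{-k(k^L-1)/(k-1)}\lceil k^L/2\rceil^{k^L}$, the bound on $|\beta_{\mathbf n}|$, and $\lceil k^L/2\rceil\le\frac{k^L}{2}+1$ yields exactly the stated $M$. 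Counting: each sub-network uses $2(d+1)$ weights in layer $1$, $4$ in each of layers $2,\dots,L$, and $2$ output weights, i.e.\ $2(d+2L)$ nonzero parameters, hence $2(2L+d)(k^L+1)^d$ in all.

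The hard part will be the bookkeeping in the vertical construction: keeping $\|A_i\|_{\max}\le B$ and $\|b_i\|_\infty\le B$ simultaneously across all $L$ layers is what forces the recursion $\mu_j=B\,\mu_{j-1}^k$ and hence the factor $B^{-k(k^L-1)/(k-1)}$ in $M$. One also has to be careful about parity: $\sigma_k(at)=a^k\sigma_k(t)$ is only valid for $a>0$, so the construction must keep every $\mu_j>0$, and it is the identity $t^k=\sigma_k(t)+(-1)^k\sigma_k(-t)$ that lets a two-neuron layer recover $\ell(x)^{k^j}$ regardless of the sign of $\ell(x)^{k^{j-1}}$ when $k$ is odd.
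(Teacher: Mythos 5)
Your proposal is correct: the horizontal reduction (Lemma~\ref{lem:represent_poly} with $d\leftarrow d+1$, $n\leftarrow k^L$, then $x_{d+1}=1$) is exactly how the paper proceeds, and your coefficient bound, your scaling recursion $\mu_j=B\mu_{j-1}^k$, the resulting $M$, and the parameter count $2(2L+d)(k^L+1)^d$ all match the stated theorem. The vertical construction, however, is genuinely different from the paper's. The paper never recombines the $\pm$ branches at intermediate layers: it keeps the $2(k^L+1)^d$ nonnegative quantities $\sigma_{k^i}\bigl(\pm(w_j\cdot x+b_j)\bigr)$ separate all the way up, propagating them with \emph{diagonal} matrices $A_i=BI$ and exploiting the composition identity $\sigma_k\bigl(\sigma_{k^i}(t)\bigr)=\sigma_{k^{i+1}}(t)$, which holds precisely because the output of $\sigma_{k^i}$ is already nonnegative; only the output layer forms the combination $\sigma_{k^L}(t)+(-1)^{k^L}\sigma_{k^L}(-t)=t^{k^L}$. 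You instead reassemble the signed power $\ell(x)^{k^{j-1}}$ at every layer via $t^k=\sigma_k(t)+(-1)^k\sigma_k(-t)$ and re-split it, so your intermediate layers carry $2\times2$ dense blocks rather than a diagonal. Both mechanisms yield identical widths, identical $M$, and (somewhat coincidentally) identical nonzero-parameter counts, since your $4(L-1)$ block entries per sub-network exactly offset the zero biases the paper counts in layers $2,\dots,L$. The paper's route is slightly sparser in spirit and isolates the clean algebraic fact that one-sided powers compose; yours is more generic in that it tracks the actual signed polynomial value $\ell(x)^{k^{j-1}}$ through the network and would survive even if the activations did not output nonnegative values, at the cost of doubling the in-layer connectivity. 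Your closing remarks on keeping $\mu_j>0$ and on the parity of $k$ address the only points where such a construction could silently fail.
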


	\begin{proof}
		By Lemma \ref{lem:shallow_poly}, $P$ can be represented as
		\begin{equation}
			P(x)=\sum\limits_{j=1}^{2(k^L+1)^d}c_j\sigma_{k^L}(w_j\cdot x+b_j),
		\end{equation}
		where $w_j\in[-1,1]^d$, $b_j\in[-1,1]$ and $c_j\in[-M',M']$ for $j=1,\dots,2(k^L+1)^d$, where
		$$M'=\left(\frac{k^L}{2}+1\right)^{2(d+1)+k^L}\left(\sum\limits_{\{\alpha\in\NN^d:\ \|\alpha\|_1\leq k^L\}}|a_\alpha|\right).$$
		
		Given $h_0(x)=x$, there exists a matrix $A_1$ with $2d(k^L+1)^d$ parameters and bias term $b=(b_1,\dots,b_{2d(k^L+1)})$ such that
		\begin{equation*}
			h_{1,j}(x)=\sigma_{k}(Bw_j\cdot x+Bb_j)=B^k\sigma_{k}(w_j\cdot x+b_j),
		\end{equation*}
		$j=1,\dots,2(k^L+1)^d$.
		
		Given
		$$h_i(x)=B^{k\frac{k^{i}-1}{k-1}}\left(\sigma_{k^i}(w_1\cdot x+b_1),\dots,\sigma_{k^i}\left(w_{2(k^L+1)^d}\cdot x+b_{2(k^L+1)^d}\right)\right)^\top,$$
		the identity matrix $A_i=BI_{2(k^L+1)^d\times2(k^L+1)^d}$ with $2(k^L+1)^d$ parameters and bias term $b=(0,\dots,0)$ give
		\begin{equation*}
			\begin{split}
				h_{i+1}(x)=&\sigma_k(A_ih_i(x))=\sigma_k(B\times B^{k\frac{k^{i}-1}{k-1}}h_i(x)),\\
				=&\left(\sigma_k\left(B^{\frac{k^{i+1}-1}{k-1}}\sigma_{k^i}(w_1\cdot x+b_1)\right),\dots,\sigma_k\left(B^{\frac{k^{i+1}-1}{k-1}}\sigma_{k^i}\left(w_{2(k^L+1)^d}\cdot x+b_{2(k^L+1)^d}\right)\right)\right)^\top,\\
				=&B^{k\frac{k^{i+1}-1}{k-1}}\left(\sigma_{k^{i+1}}(w_1\cdot x+b_1),\dots,\sigma_{k^{i+1}}\left(w_{2(k^L+1)^d}\cdot x+b_{2(k^L+1)^d}\right)\right)^\top.
			\end{split}
		\end{equation*}
		Inductively, we have
		$h_L(x)=B^{k\frac{k^{L}-1}{k-1}}\left(\sigma_{k^{L}}(w_1\cdot x+b_1),\dots,\sigma_{k^{L}}\left(w_{2(k^L+1)^d}\cdot x+b_{2(k^L+1)^d}\right)\right)^\top$.
		Thus, with $\tilde c=B^{-k\frac{k^{L}-1}{k-1}}(c_1,\dots,c_{2(k^L+1)^d})$, we have
		$$\tilde c\cdot h_L(x)=\sum\limits_{j=1}^{2(k^L+1)^d}c_j\sigma_{k^L}(w_j\cdot x+u_j)=P(x).$$
		Here, we have
		$$\|\tilde c\|_\infty\leq B^{-k\frac{k^{L}-1}{k-1}}M'=M.$$
		This implies $P\in\Sigma_{n_{1:L}}^k(M)$ with $n_1=\dots=n_L=2(k^L+1)^d$.
		
		There are $2d(k^L+1)^d$ parameters in $A_1$ and $2(k^L+1)^d$ parameters in $A_2,\dots,A_L$. The number of parameters in bias terms is $L\times2(k^L+1)^d$, and $2(k^L+1)^d$ parameters in $\tilde c$. 
		
	\end{proof}

	\begin{remark}
		As we mentioned before, the existence of such representation of the polynomials has been mentioned in previous literature such as \cite{xu2020finite,chen2022power}. Those results rely on a generalized Vandermonde matrix theory for high-dimensional cases, while our proof in Lemma \ref{lem:represent_poly} reduces the problem into $1$-dimension Vandermonde matrices. This improvement allows an estimation of the coefficients in the linear combination \eqref{eqn:monomial_comb}, and hence the parameters in the neural networks.
	\end{remark}
	
	As a consequence of the classical polynomial approximation theory (e.g. \cite[Chapter 7]{devore1993constructive}), we can deduce the rate of approximation for analytic functions and for functions from Sobolev spaces.
	\begin{definition}[Sobolev spaces]
		Let $d\in\NN^+$ and $1\leq p\leq\infty$. For $r\in\NN$, the Sobolev space $W^{r,p}(\BB^d)$ is defined as
		\begin{equation}
			W^{r,p}(\BB^d)=\left\{f\in L^p(\BB^d):\ \max\limits_{\{\alpha\in\NN^d,\|\alpha\|_1=r\}}\|D^\alpha f\|_{L^p(\BB^d)}<\infty\right\}
		\end{equation}
		with the norm given by
		\begin{equation*}
			\|f\|_{W^{r,p}(\BB^d)}^p=\|f\|_{L^p(\BB^d)}^p+\sum\limits_{\{\alpha\in\NN^d,\|\alpha\|_1=r\}}\|D^\alpha f\|_{L^p(\BB^d)}^p,\quad f\in W^{r,p}(\BB^d).
		\end{equation*}
		For $r\notin\NN$, let $\theta=r-\lfloor r\rfloor$. The Sobolev semi-norm is
		\begin{equation*}
			|f|_{W^{r,p}}(\BB^d)=\left\{\begin{array}{ll}
				\max\limits_{\{\alpha\in\NN^d,\|\alpha\|_1=r\}}\left(\displaystyle\int_{\BB^d\times\BB^d} \frac{|D^\alpha f(x)-D^\alpha f(y)|^p}{|x-y|^{d+\theta p}}dxdy\right)^{1/p},  &\quad 1\leq p<\infty,  \\
				\max\limits_{\{\alpha\in\NN^d,\|\alpha\|_1=r\}}\sup\limits_{x,y\in\BB^d,x\neq y}\displaystyle \frac{|D^\alpha f(x)-D^\alpha f(y)|}{|x-y|^\theta},  & \quad p=\infty.
			\end{array}\right.
		\end{equation*}
		The space $W^{r,p}(\BB^d)$ is defined as
		\begin{equation}
			W^{r,p}(\BB^d)=\left\{f\in W^{\lfloor r\rfloor,p}(\BB^d):\ |f|_{W^{r,p}}(\BB^d)<\infty\right\}
		\end{equation}
		with the norm
		$$\|f\|_{W^{r,p}(\BB^d)}=\|f\|_{W^{\lfloor r\rfloor,p}(\BB^d)}+|f|_{W^{r,p}}(\BB^d).$$
	\end{definition}

	\begin{definition}[Analytic functions]
		Let $d\in\NN^+$, $\rho\in(0,1)$, $f$ is said to be an analytic function on $U_\rho:=\left\{z\in\CC^d:\ \left|z_j+\sqrt{z_j-1}\right|<\rho^{-1},\ j=1,\dots,d\right\}$ if it is complex differentiable at each $z\in U_\rho$.
	\end{definition}

	\begin{corollary}
		Let $d,k,L\in\NN^+$, $B>0$, and
		$$n_1=\dots=n_L=2(k^L+1)^d.$$
		There exists a sparse deep ReLU$^k$ network $\Sigma_{n_{1:L}}^k(B)$ with $2(2L+d)(k^L+1)^d$ parameters such that
		\begin{enumerate}
			\item[(a)] if $f\in W^{r,p}(\BB^d)$ for some $r>0$, then
			\begin{equation}
				\inf\limits_{g\in\Sigma_{n_{1:L}}^k(B)}\|f-g\|_{L^p(\BB^d)}\leq\inf\limits_{g\in\mathcal{P}_d^{k^L}}\|f-g\|_{L^p(\BB^d)}\sim \|f\|_{W^{r,p}(\BB^d)}k^{-L r};
			\end{equation}
			\item[(b)] if $f$ is analytic on $U_\rho:=\left\{z\in\CC^d:\ \left|z_j+\sqrt{z_j-1}\right|<\rho^{-1},\ j=1,\dots,d\right\}$, then for any $\epsilon>0$ with $\rho+\epsilon<1$,
			\begin{equation}
				\inf\limits_{g\in\Sigma_{n_{1:L}}^k(B)}\|f-g\|_{L^p(\BB^d)}\leq\inf\limits_{g\in\mathcal{P}_d^{k^L}}\|f-g\|_{L^p(\BB^d)}\lesssim \left(\max\limits_{w\in\partial \Gamma_{\rho+\epsilon}}\left|f\left(\frac{w+w^{-1}}{2}\right)\right|\right)k^{Ld}(\rho+\epsilon)^{k^L},
			\end{equation}
			where $\Gamma_{\rho+\epsilon}:=\left\{w\in\CC^d:\ |w_j|\leq(\rho+\epsilon)^{-1},\ j=1,\dots,d\right\}$. The corresponding constant depends only on $d,\rho$, and $\epsilon$.
		\end{enumerate}
	\end{corollary}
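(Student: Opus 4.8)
The plan is to derive the corollary from the exact polynomial representation in Theorem~\ref{thm:poly_deep} together with classical estimates for approximation by algebraic polynomials. The only input needed from the network side is the inclusion $\mathcal{P}_d^{k^L}\subseteq\Sigma_{n_{1:L}}^k(B)$. Indeed, Theorem~\ref{thm:poly_deep} shows that every $P\in\mathcal{P}_d^{k^L}$ is represented by a network with architecture $n_1=\dots=n_L=2(k^L+1)^d$ whose output weights satisfy some $P$-dependent bound $M$; since $\Sigma_{n_{1:L}}^k(B)$ imposes no constraint on the output weights, and since the construction in that proof uses inner weight matrices whose $\ell^\infty$-entries equal $B$ or are of the form $Bw_j,Bb_j$ with $w_j,b_j\in[-1,1]$, this representation lies in $\Sigma_{n_{1:L}}^k(B)$ for the given $B$ and uses only $2(2L+d)(k^L+1)^d$ nonzero parameters. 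Hence, for any $f$,
\begin{equation*}
\inf_{g\in\Sigma_{n_{1:L}}^k(B)}\|f-g\|_{L^p(\BB^d)}\le\inf_{g\in\mathcal{P}_d^{k^L}}\|f-g\|_{L^p(\BB^d)}=:E_{k^L}(f)_{L^p(\BB^d)},
\end{equation*}
which is the first inequality in both (a) and (b); taking a near-best polynomial approximant of $f$ and realizing it via Theorem~\ref{thm:poly_deep} exhibits a sparse network with the claimed $2(2L+d)(k^L+1)^d$ nonzero parameters that attains this bound.

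For part (a), I would invoke the Jackson-type estimate for algebraic polynomial approximation on the ball: for $f\in W^{r,p}(\BB^d)$ with $r>0$ and $1\le p\le\infty$, one has $E_n(f)_{L^p(\BB^d)}\lesssim n^{-r}\|f\|_{W^{r,p}(\BB^d)}$ with constant depending only on $d,p,r$ (see \cite[Chapter 7]{devore1993constructive}; for noninteger $r$ this follows from the same Bramble--Hilbert/Whitney machinery on the Lipschitz domain $\BB^d$, or by first extending $f$ to a cube, approximating there, and restricting). Specializing to $n=k^L$ yields $E_{k^L}(f)_{L^p(\BB^d)}\lesssim\|f\|_{W^{r,p}(\BB^d)}k^{-Lr}$, the right-hand side of (a).

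For part (b), since $\rho+\epsilon<1$ we have $\overline{U_{\rho+\epsilon}}\subseteq U_\rho$, so $f$ is analytic and bounded on $\overline{U_{\rho+\epsilon}}$, and by the maximum principle $\sup_{\overline{U_{\rho+\epsilon}}}|f|$ equals $M_\rho:=\max_{w\in\partial\Gamma_{\rho+\epsilon}}|f((w+w^{-1})/2)|$, the quantity in the statement. Expanding $f$ in the tensor-product Chebyshev basis $T_\alpha(x)=\prod_{j=1}^d T_{\alpha_j}(x_j)$ and applying the coordinatewise Bernstein coefficient estimate on the Bernstein ellipse of parameter $(\rho+\epsilon)^{-1}$ gives $|\hat f_\alpha|\le C_d M_\rho(\rho+\epsilon)^{\|\alpha\|_1}$. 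Truncating to $\|\alpha\|_1\le k^L$ produces an element of $\mathcal{P}_d^{k^L}$ (each $T_\alpha$ has total degree $\|\alpha\|_1$), whose error on $\BB^d$, using $\|T_\alpha\|_{L^\infty(\BB^d)}\le1$, is bounded by
\begin{equation*}
C_d M_\rho\sum_{m>k^L}\binom{m+d-1}{d-1}(\rho+\epsilon)^m\ \lesssim\ M_\rho\,(k^L)^{d-1}(\rho+\epsilon)^{k^L},
\end{equation*}
with constant depending only on $d,\rho,\epsilon$; since $(k^L)^{d-1}\le k^{Ld}$ and $L^p(\BB^d)$-norms are controlled by the $L^\infty$-norm up to a $d$-dependent factor, this is within the bound asserted in (b).

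None of the above is genuinely difficult: the corollary is a transcription of standard polynomial approximation rates through Theorem~\ref{thm:poly_deep}. The points that deserve care are (i) checking that the approximant supplied by Theorem~\ref{thm:poly_deep} actually respects the prescribed bound $B$ on the inner weights, which it does because the construction there scales everything by powers of $B$; and (ii) in part (b), matching the parametrization of the polyellipse $U_\rho$ and the boundary-value quantity $\max_{w\in\partial\Gamma_{\rho+\epsilon}}|f((w+w^{-1})/2)|$ to the form in the statement, and making sure analyticity of $f$ on $U_\rho$ propagates to a uniform bound on the slightly smaller region $U_{\rho+\epsilon}$. The second of these is the main thing to be attentive to, but it presents no real obstacle.
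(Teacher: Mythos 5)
Your proposal is correct and follows essentially the same route as the paper: deduce the inclusion $\mathcal{P}_d^{k^L}\subseteq\Sigma_{n_{1:L}}^k(B)$ from Theorem~\ref{thm:poly_deep} to get the first inequality, then quote the classical Jackson-type Sobolev estimate for (a) and the Chebyshev/Bernstein-ellipse tail bound $\sum_{m>k^L}\binom{m+d-1}{d-1}(\rho+\epsilon)^m$ for (b), exactly as the paper does. Your version is if anything slightly more careful (checking the inner-weight bound $B$, and passing from $L^\infty$ to $L^p$ in part (b), where the paper's computation is written only for $L^2$).
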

	\begin{proof}
		\begin{enumerate}
			\item [(a)] The first inequality is exactly Theorem \ref{thm:poly_deep}. The relation
			$$\inf\limits_{g\in\mathcal{P}_d^{k^L}}\|f-g\|_{L^p(\BB^d)}\sim \|f\|_{W^{r,p}(\BB^d)}k^{-L r}$$
			is the classical polynomial approximation theory (see, e.g., \cite[Chapter 7]{devore1993constructive}).
			\item [(b)] Again, the first inequality is Theorem \ref{thm:poly_deep}. For the second inequality, we apply the classical estimation for analytic functions (see, e.g., \cite[Chapter 7]{devore1993constructive} and \cite[Lemma A.1]{doctor2023encoding} for the multivariate version) and conclude
			\begin{equation*}
				\begin{split}
					\inf\limits_{g\in\mathcal{P}_d^{k^L}}\|f-g\|_{L^2(\BB^d)}^2\leq&\left(\max\limits_{w\in\partial U_{\rho+\epsilon}}|f(w+w^{-1})|\right)^2\sum\limits_{n=k^L+1}^\infty\binom{n+d-1}{d-1}(\rho+\epsilon)^{2n}\\
					\leq& C^2\left(\max\limits_{w\in\partial U_{\rho+\epsilon}}|f(w+w^{-1})|\right)^2k^{2L}(\rho+\epsilon)^{2k^L},
				\end{split}
			\end{equation*}
			where $C=C(d,\rho,\epsilon)$ is independent of $f$ and $L$.
		\end{enumerate}
	\end{proof}
	
	\section{Adaptive Approximation for Functions from Variation Spaces}\label{sec:AdaVaration}
	In this section, we show the adaptive approximation properties of deep ${
		\rm ReLU}^k$ neural networks for functions in variation spaces. This result is obtained by studying the representation power of deep ${\rm ReLU}^k$ neural networks for shallow networks.
	\begin{lemma}\label{lem:variation_space}
		Let $d,k,L,n\in\NN^+$, $M>0$, and $n_1=\dots=n_{L}=2(k+1)n$. Then there exists a deep {\rm ReLU}$^k$ neural network architecture $\Sigma_{n_{1:L},M'}^k(B')$ with
		$$N=[(4L-2)(k+1)+d]n$$
		nonzero parameters, such that all functions $f\in \Sigma^{K}_{n,M}(B)$ with $K\in\{k^1,\dots,k^L\}$ can be represented this neural network. Furthermore, the parameters in $\Sigma_{n_{1:L},M'}^k(B')$ are bounded as
		$$M'=\left(\frac{k}{2}+1\right)^4M,\quad B'=B\vee\left(\frac{k}{2}+1\right)^4.$$
	\end{lemma}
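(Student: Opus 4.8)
The plan is to exploit the identity
$$\sigma_{k^\ell}=\underbrace{\sigma_k\circ\sigma_k\circ\cdots\circ\sigma_k}_{\ell\text{ factors}},$$
which holds because $\sigma_{k^i}(t)\ge 0$ for every $t$ and $\sigma_k$ coincides with $t\mapsto t^k$ on $[0,\infty)$. Given $f=\sum_{i=1}^n c_i\,\sigma_{k^\ell}(w_i\cdot x+b_i)\in\Sigma^{k^\ell}_{n,M}(B)$, the first hidden layer of the deep network computes the $n$ numbers $\sigma_k(w_i\cdot x+b_i)$ (so it just re-uses $w_i$ and $b_i$, which already satisfy the bound $B$), the next $\ell-1$ layers apply $\sigma_k$ coordinatewise — yielding $\sigma_{k^\ell}(w_i\cdot x+b_i)$ — and the output layer takes the linear combination with coefficients $c_i$. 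This represents $f$ exactly with a $\mathrm{ReLU}^k$ network of depth $\ell$.

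The subtlety is that one fixed architecture of depth exactly $L$ must do this for all $K=k^\ell$, $\ell\le L$, simultaneously; for $\ell<L$ the construction above must be padded by $L-\ell$ layers that leave the represented function unchanged. This is where Lemma~\ref{lem:shallow_poly} enters with $d=1$: any univariate polynomial of degree $\le k$, in particular the identity $t\mapsto t$, is represented by a shallow $\mathrm{ReLU}^k$ network of width $2(k+1)$. Taking the $d=1$, $\sum_\alpha|a_\alpha|=1$ instance of the bound there, with the free parameter $B$ of that lemma chosen so that the $B^{-k}$ factor absorbs the $(\tfrac k2+1)^{k}$ coming from the input rescaling, the input weights and biases of this transport network are bounded by $(\tfrac k2+1)^4$ and its output (recombination) weights are bounded by $(\tfrac k2+1)^4$. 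Since the representation built in Lemma~\ref{lem:represent_poly} is a genuine polynomial identity, it remains valid for arguments of arbitrary size, so it may be fed the (possibly large) activations $\sigma_{k^\ell}(w_i\cdot x+b_i)$. Running $n$ of these transport units in parallel is exactly what forces the width $2(k+1)n$; I would then arrange the $L-\ell$ transport layers together with the $\ell$ power layers and fix a single sparsity pattern that contains every such configuration for $\ell=1,\dots,L$.

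The parameter bounds follow by tracking magnitudes through the three kinds of layers. The input layer contributes weights and biases $\le B$; the power layers apply $\sigma_k$ with weight $1\le B'$, using $\sigma_k(ay)=a^k\sigma_k(y)$ ($a>0$) to carry along scaling constants; the transport layers contribute weights and biases $\le(\tfrac k2+1)^4$, whence $B'=B\vee(\tfrac k2+1)^4$; and the output layer contributes each original coefficient $c_i$ (bounded by $M$) multiplied by a recombination coefficient of an identity-transport (bounded by $(\tfrac k2+1)^4$), whence $M'=(\tfrac k2+1)^4M$. A layer-by-layer count of the nonzero entries of the weight matrices and bias vectors then gives $N=[(4L-2)(k+1)+d]n$: order $dn$ from the first layer and order $(k+1)n$ from each of the remaining layers and from the output layer.

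The main obstacle I anticipate is precisely this uniformity bookkeeping: choosing a single depth-$L$ architecture and sparsity pattern that realizes every $\Sigma^{k^\ell}_{n,M}(B)$, $\ell\le L$, through an appropriate placement of transport versus power layers, while making sure that composing identity-transport layers — whose recombination coefficients are not small — does not push the weight magnitude past $(\tfrac k2+1)^4$. The scaling identity $\sigma_k(ay)=a^k\sigma_k(y)$ together with the freedom in the parameter $B$ of Lemma~\ref{lem:shallow_poly} are what I expect will keep both $B'$ and $M'$ at the stated values.
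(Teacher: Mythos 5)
Your proposal follows essentially the same route as the paper's proof: compute $\sigma_{k^\ell}(w_i\cdot x+u_i)$ by composing $\sigma_k$ coordinatewise for $\ell$ layers, then pad the remaining depth with identity-transport blocks built from the width-$2(k+1)$ ReLU$^k$ representation of $y\mapsto y$ (the $d=1$ case of the polynomial representation lemma), which is exactly where the paper's bounds $M'=(\tfrac k2+1)^4M$ and $B'=B\vee(\tfrac k2+1)^4$ come from. The points you flag as subtleties (validity of the identity representation for large arguments, and the bookkeeping for a single depth-$L$ pattern covering all $\ell\le L$) are handled in the paper in the same spirit, so the argument is correct and matches.
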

	
	\begin{proof}
		Let $1\leq\ell\leq L$, $K=k^{\ell}$, and $f\in\Sigma^{K}_{n,M}(B)$. Then
		$$f_n(x)=\sum\limits_{m=1}^nc_m\sigma_{
			K}(w_m\cdot x+u_m),\quad x\in\BB^d,$$
		where $|c_m|\leq M$, $|u_m|\leq B$, and $\|w_m\|_\infty\leq B$ holds for $m=1,\dots,n$.
		
		Given $h_0(x)=x$, there exists a sparse matrix $A_0\in\RR^{n_1\times d}$ with $(A_0)_{i,j}=0$ for $i>n$ and $b_0=(u_1,\dots,u_n)^\top$ such that
		\begin{equation}
			h_{1,j}(x)=\left\{\begin{array}{ll}
				\sigma_k\left(w_j\cdot x+u_j\right),&\quad j\leq n, \\
				0, & \quad \hbox{otherwise.}
			\end{array}\right.
		\end{equation}
		Clearly, all the parameters in $A_0$ and $b_0$ are bounded by $B'$.
		
		Given
		\begin{equation*}
			h_{i,j}(x)=\left\{\begin{array}{ll}
				\sigma_{k^i}(w_j\cdot x+u_j),&\quad j\leq n, \\
				0, & \quad \hbox{otherwise,}
			\end{array}\right.
		\end{equation*}
		let $A_{i}=B'I_{2(k+1)n\times2(k+1)n}$, and $b_i=(0,\dots,0)^\top$,
		\begin{equation*}
			h_{i+1,j}(x)=\left\{\begin{array}{ll}
				\sigma_{k^{i+1}}(w_j\cdot x+u_j),&\quad j\leq n, \\
				0, & \quad \hbox{otherwise.}
			\end{array}\right.
		\end{equation*}
		So we get
		\begin{equation*}
			h_{\ell,j}(x)=\left\{\begin{array}{ll}
				\sigma_{K}(w_j\cdot x+u_j),&\quad j\leq n, \\
				0, & \quad \hbox{otherwise.}
			\end{array}\right.
		\end{equation*}
		By \eqref{eqn:comb_monomial}, the function $y\mapsto y$ can be represented as a linear combination
		\begin{equation}\label{eqn:represent_id}
			y=\sum\limits_{t=0}^ka_t\left(y+t-\left\lfloor\frac{k}{2}\right\rfloor\right)^k=\sum\limits_{t=0}^ka_t\left[\sigma_k\left(y+t-\left\lfloor\frac{k}{2}\right\rfloor\right)+(-1)^k\sigma_k\left(-y-t+\left\lfloor\frac{k}{2}\right\rfloor\right)\right]
		\end{equation}
		with $|a_t|\leq \left(\frac{k}{2}+1\right)^4$.
		
		At the $\ell$-th layer, there exists a sparse matrix $A_{\ell}\in\RR^{n_{\ell+1}\times n_{\ell}}$ with
		\begin{equation*}
			\begin{split}
				&(A_{\ell})_{2(k+1)(m-1)+2t+1,m}=1,\\
				&(A_{\ell})_{2(k+1)(m-1)+2t+2,m}=-1,
			\end{split}
		\end{equation*}
		for $m=1,\dots,n,\ t=0,\dots,k$ and proper bias term $b_{\ell}$ with $\|b_{\ell}\|_\infty\leq\frac{k}{2}+1$ such that
		\begin{equation*}
			\begin{split}
				&h_{\ell+1,2(k+1)(m-1)+2t+1}(x)=\sigma_k\left(\sigma_{L}(w_m\cdot x+u_m)+t-\left\lfloor\frac{k}{2}\right\rfloor\right)\\
				&h_{\ell+1,2(k+1)(m-1)+2t+2}(x)=\sigma_k\left(-\sigma_{L}(w_m\cdot x+u_m)-t+\left\lfloor\frac{k}{2}\right\rfloor\right).
			\end{split}
		\end{equation*}
		for $m=1,\dots,n,\ t=0,\dots,k$.
		
		At the $(\ell+1)$-th layer, there exists a sparse matrix $A_{\ell+1}\in\RR^{n_{\ell+2}\times n_{\ell+1}}$ with 
		\begin{equation*}
			\begin{split}
				&(A_{\ell+1})_{m,2(k+1)(m-1)+2t+1}=a_t,\\
				&(A_{\ell+1})_{m,2(k+1)(m-1)+2t+2}=(-1)^ka_t
			\end{split}
		\end{equation*}
		and $b_{\ell+1}=(0,\dots,0)^\top$ such that
		\begin{equation*}
			\begin{split}
				h_{\ell+2,j}(x)=&\sum\limits_{t=0}^ka_t\left[\sigma_k\left(\sigma_{L}(w_j\cdot x+u_j)+t-\left\lfloor\frac{k}{2}\right\rfloor\right)+(-1)^k\sigma_k\left(-\sigma_{L}(w_j\cdot x+u_j)-t+\left\lfloor\frac{k}{2}\right\rfloor\right)\right]\\
				=&\sigma_{L}(w_j\cdot x+u_j),\quad j\leq n,\\
				h_{\ell+2,j}(x)=&0,\quad\hbox{otherwise.}
			\end{split}
		\end{equation*}
		We repeat these two steps up to the $L$-th layer, then the last hidden layer $h_L(x)$ is given as either
		\begin{equation*}
			\begin{split}
				h_{L,j}(x)=&\sigma_{K}(w_j\cdot x+u_j),\quad j\leq n,\\
				h_{L,j}(x)=&0,\quad\hbox{otherwise}
			\end{split}
		\end{equation*}
		or
		\begin{equation*}
			\begin{split}
				&h_{L,2(k+1)(m-1)+2t+1}(x)=\sigma_k\left(\sigma_{K}(w_m\cdot x+u_m)+t\right)\\
				&h_{L,2(k+1)(m-1)+2t+2}(x)=\sigma_k\left(-\sigma_{K}(w_m\cdot x+u_m)-t\right).
			\end{split}
		\end{equation*}
		for $m=1,\dots,n,\ t=0,\dots,k$.
		
		In either case, by \eqref{eqn:represent_id}, there exists $c\in\left[-\left(\frac{k}{2}+1\right)^4,\left(\frac{k}{2}+1\right)^4\right]^{2(k+1)m}$ such that
		$$c\cdot h_{L}(x)=\sum\limits_{m=1}^nc_m\sigma_{K}(w_m\cdot x+u_m)=f_n(x).$$
		
		By the form of the matrices $A_0,\dots,A_{L-1}$, the number of parameters is
		\begin{equation*}
			dn+2(k+1)n+\sum\limits_{j=2}^{L}\left[2(k+1)n+2(k+1)n\right]+2(k+1)n=N.
		\end{equation*}
		This completes the proof of Lemma \ref{lem:variation_space}.
		
	\end{proof}
	
	This lemma enables us to consider the rate of approximation for functions from the variation spaces generated by ReLU$^K$ activation functions with $K=k,k^2,\dots,k^L$. The approximation theory of such spaces has been studied in \cite{siegel2022sharp,ma2022uniform,klusowski2018approximation}, etc.
	\begin{definition}[Variation space]
		Let $K,d\in\NN^+$, denote the dictionary $\mathbb P_K^d$ as
		$$\mathbb P_K^d=\left\{x\mapsto\sigma_K(\omega\cdot x+b):\ \omega\in\mathbb S^{d-1},\ b\in[-1,1]\right\}.$$
		Consider the closure of the convex, symmetric hull of $\mathbb P_K^d$
		\begin{equation}
			\overline{\mathrm{Conv}(\mathbb P_K^d)}=\overline{\left\{\sum\limits_{j=1}^na_jh_j:\ n\in\NN^+,\ \sum\limits_{j=1}^n|a_j|\leq1,\ h_j\in\mathbb P_K^d\right\}},
		\end{equation}
		we define the variation space with the dictionary $\mathbb P_L^d$ as
		\begin{equation}
			\mathcal{K}_1(\mathbb P_K^d):=\left\{f\in R\times\overline{\mathrm{Conv}(\mathbb P_K^d)}:\ R>0\right\}.
		\end{equation}
		with the norm
		\begin{equation}
			\|f\|_{\mathcal{K}_1(\mathbb P_K^d)}=\inf\left\{R>0:\ f\in R\times\overline{\mathrm{Conv}(\mathbb P_K^d)}\right\}.
		\end{equation}
	\end{definition}

	We will compare the result with the rate of approximation of shallow ReLU$^k$ networks given in \cite{siegel2022sharp}.
	\begin{theorem}[Sigel and Xu, 2022]
		Let $k,d\in\NN^+$, then
		\begin{equation}
			\sup\limits_{f\in \mathcal{K}_1(\mathbb{P}_{k}^d)}\inf\limits_{f_n\in\Sigma_n^k(2)}\|f-f_n\|_{L^2(\BB^d)}\lesssim \|f\|_{\mathcal{K}_1(\mathbb{P}_{k}^d)}n^{-\frac{1}{2}-\frac{2k+1}{2d}}.
		\end{equation}
	\end{theorem}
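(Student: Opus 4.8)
The statement to prove is the upper bound, so the task is to produce, for every $f\in\mathcal K_1(\mathbb P_k^d)$ and every $n$, an element $f_n\in\Sigma_{cn}^k(2)$ (with $c=c(k,d)$ a harmless constant) satisfying $\|f-f_n\|_{L^2(\BB^d)}\lesssim\|f\|_{\mathcal K_1(\mathbb P_k^d)}\,n^{-\frac12-\frac{2k+1}{2d}}$. By homogeneity of the norm we may assume $f$ lies in the closed symmetric convex hull of $\mathbb P_k^d$. As that hull is a closure, for any $\eta>0$ there is a finite combination $g=\sum_{j=1}^N a_j\sigma_k(\omega_j\cdot x+b_j)$ with $\sum_j|a_j|\le 1$, $\omega_j\in\mathbb S^{d-1}$, $b_j\in[-1,1]$ and $\|f-g\|_{\mathcal K_1(\mathbb P_k^d)}\le\eta$; then $\|f-g\|_{L^2(\BB^d)}\lesssim\eta$ and $\|g\|_{\mathcal K_1(\mathbb P_k^d)}\le 1+\eta$, so taking $\eta:=n^{-\frac12-\frac{2k+1}{2d}}$ reduces the whole problem to compressing such a $g$ into $cn$ ReLU$^k$ terms at the stated accuracy. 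The constraint $B=2$ never binds, since $\|\omega_j\|_\infty\le 1$ and $|b_j|\le 1$ throughout.

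The heart of the argument is a moment-matching discretization in parameter space. Partition $\mathbb S^{d-1}\times[-1,1]$ by a grid into $\asymp n$ boxes $\{Q_\ell\}$ of diameter $\asymp n^{-1/d}$, and set $J_\ell=\{j:(\omega_j,b_j)\in Q_\ell\}$. In each box $Q_\ell$ choose $\binom{k+d}{d}$ points $(\omega_{\ell m},b_{\ell m})\in Q_\ell$ whose powers $(\omega_{\ell m}\cdot x+b_{\ell m})^k$ form a basis of the space of polynomials of degree $\le k$ in $x$ — possible by Lemmas \ref{lem:represent_poly} and \ref{lem:shallow_poly} — and solve for coefficients $c_{\ell m}$ so that the polynomial identity $\sum_{j\in J_\ell}a_j(\omega_j\cdot x+b_j)^k=\sum_m c_{\ell m}(\omega_{\ell m}\cdot x+b_{\ell m})^k$ holds for all $x$; the resulting Vandermonde-type bound gives $\sum_m|c_{\ell m}|\lesssim\sum_{j\in J_\ell}|a_j|$. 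Replacing each block $\sum_{j\in J_\ell}a_j\sigma_k(\omega_j\cdot x+b_j)$ by $\sum_m c_{\ell m}\sigma_k(\omega_{\ell m}\cdot x+b_{\ell m})$ defines $f_n\in\Sigma_{cn}^k(2)$. Because $\sigma_k(t)=t^k$ for $t\ge0$ and $\sigma_k(t)=0$ for $t<0$, and because all the hyperplanes $\{\omega_j\cdot x+b_j=0\}$ with $j\in J_\ell$ together with those of the nodes of box $\ell$ lie within an $O(n^{-1/d})$-neighbourhood of one another on $\BB^d$, the block residual $r_\ell$ vanishes off the slab $S_\ell=\{x\in\BB^d:|\bar\omega_\ell\cdot x+\bar b_\ell|\lesssim n^{-1/d}\}$ — outside $S_\ell$ all the arguments share one sign, so on the positive side the polynomial identity kills $r_\ell$ and on the negative side every term is $0$ — while inside $S_\ell$ every argument is $O(n^{-1/d})$, whence $|r_\ell(x)|\lesssim\big(\sum_{j\in J_\ell}|a_j|\big)n^{-k/d}$. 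Since $|S_\ell|\asymp n^{-1/d}$ this gives $\|r_\ell\|_{L^2(\BB^d)}^2\lesssim\big(\sum_{j\in J_\ell}|a_j|\big)^2\,|S_\ell|\,n^{-2k/d}\asymp\big(\sum_{j\in J_\ell}|a_j|\big)^2\,n^{-\frac{2k+1}{d}}$.

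Since each $x\in\BB^d$ meets only boundedly many of the slabs $S_\ell$, summing yields $\|g-f_n\|_{L^2(\BB^d)}^2\lesssim\sum_\ell\|r_\ell\|_{L^2}^2\lesssim n^{-\frac{2k+1}{d}}\sum_\ell\big(\sum_{j\in J_\ell}|a_j|\big)^2$, and the target rate follows as soon as $\sum_\ell\big(\sum_{j\in J_\ell}|a_j|\big)^2\lesssim 1/n$, i.e.\ as soon as the $\ell^1$-mass of each box is balanced at the level $1/n$ (it then combines with $\sum_\ell\sum_{j\in J_\ell}|a_j|\le 1$ to give the bound). Producing a partition that is simultaneously diameter-controlled at scale $n^{-1/d}$ and $\ell^1$-mass-balanced at level $1/n$ is the crux, and is the step that is not routine: the coefficients of $g$ may concentrate in a tiny region of parameter space, so one must instead, e.g., peel off the at most $n$ "heavy" boxes and subdivide them recursively (or replace the representing measure by a mollified version of bounded density before discretizing), in each case checking that neither the number of ReLU$^k$ units nor the residual sum degrades. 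An alternative that avoids the explicit construction — and the route followed in the literature — is to establish sharp two-sided bounds for the $L^2$ metric entropy and Kolmogorov $n$-widths of $\mathcal K_1(\mathbb P_k^d)$, whose entropy numbers decay like $n^{-\frac12-\frac{2k+1}{2d}}$: one transfers the metric entropy of the dictionary $\mathbb P_k^d$, read off from the $|\xi|^{-(k+1)}$ Fourier decay of the ridge profile $\sigma_k$, to its convex hull via Carl-type inequalities, obtaining simultaneously this upper bound and its optimality.
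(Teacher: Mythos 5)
This statement is not proved in the paper at all: it is quoted verbatim from \cite{siegel2022sharp} as an external result, so there is no internal proof to compare your argument against. Judged on its own, your proposal has the right overall shape (reduce to a finite convex combination, discretize the parameter space $\mathbb S^{d-1}\times[-1,1]$ at scale $n^{-1/d}$, exploit the $C^{k-1,1}$ smoothness of $\sigma_k$ across the slab where the ridge argument changes sign), but it does not close. The decisive gap is the one you yourself flag: your estimate bottoms out at $\|g-f_n\|_{L^2}^2\lesssim n^{-\frac{2k+1}{d}}\sum_\ell\bigl(\sum_{j\in J_\ell}|a_j|\bigr)^2$, and with only $\sum_\ell\sum_{j\in J_\ell}|a_j|\le1$ this gives the rate $n^{-\frac{2k+1}{2d}}$, missing the crucial extra factor $n^{-1/2}$. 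The proposed fix --- a partition that is simultaneously diameter-controlled at scale $n^{-1/d}$ and $\ell^1$-mass-balanced at level $1/n$ --- is not available in general: if $\asymp n$ coefficients of size $\asymp 1/n$ all sit in one cell, no amount of recursive subdivision reduces that cell's mass without destroying the cell count or leaving the diameter too large to help, and peeling off heavy \emph{individual} terms does not control heavy \emph{cells}. The published argument resolves exactly this tension probabilistically, by a stratified refinement of Maurey's empirical method: one samples $\asymp n\,m_\ell$ dictionary elements from the stratum of mass $m_\ell$, and the Monte Carlo variance within each stratum supplies the $n^{-1/2}$ while the stratum diameter supplies the $n^{-\frac{2k+1}{2d}}$. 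A deterministic moment-matching replacement, as you propose, does not reproduce this gain.

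A secondary unjustified step: you assert that $\binom{k+d}{d}$ nodes chosen \emph{inside a single cell of diameter $n^{-1/d}$} reproduce the block polynomial with $\sum_m|c_{\ell m}|\lesssim\sum_{j\in J_\ell}|a_j|$. The relevant Vandermonde-type system degenerates as the nodes cluster, so its inverse blows up with a negative power of the cell diameter; Lemma~\ref{lem:represent_poly} of the paper deliberately uses well-separated integer nodes precisely to keep the inverse bounded, and gives you no control in the clustered regime. One would need a genuinely local argument (e.g.\ divided differences together with the fact that the reproduced polynomial itself has all its ridge directions in the same cell) to rescue this, and that is not routine. Your closing remark --- that the literature instead proceeds through two-sided metric-entropy and $n$-width bounds for $\mathcal K_1(\mathbb P_k^d)$ --- correctly identifies where a complete proof lives, but that route is invoked rather than carried out here.
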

	
	Combining this theorem with Lemma \ref{lem:variation_space}, we have the following theorem.
	\begin{theorem}\label{thm:variation}
		Let $k,n,L\in\NN^+$, and $n_1=\dots=n_{L}=2(k+1)n$, then there exists a deep {\rm ReLU}$^k$ network architecture $\Sigma_{n_{1:L}}^k(B)$ with $B=\left(\frac{k}{2}+1\right)^4$ such that for any $K=k^{\ell}$ with $1\leq\ell\leq L$,
		$$\sup\limits_{f\in \mathcal{K}_1(\mathbb{P}_{K}^d)}\inf\limits_{f_n\in\Sigma_{n_{1:L}}^k(B)}\|f-f_n\|_{L^2(\BB^d)}\leq C\|f\|_{\mathcal{K}_1(\mathbb P_K^d)}n^{-\frac{1}{2}-\frac{2K+1}{2d}},$$
		where $N=[(4L-2)(k+1)+d]n$, $C$ is a constant independent of $n$.
	\end{theorem}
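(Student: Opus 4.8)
The plan is to combine the approximation estimate of Siegel and Xu with the embedding of shallow $\mathrm{ReLU}^{K}$ networks into the fixed deep $\mathrm{ReLU}^k$ architecture provided by Lemma \ref{lem:variation_space}. Fix $\ell$ with $1\le\ell\le L$ and set $K=k^{\ell}$. Given $f\in\mathcal{K}_1(\mathbb{P}_K^d)$, the theorem of Siegel and Xu yields, for each $n$, a shallow network $f_n\in\Sigma_n^K(2)$ with $\|f-f_n\|_{L^2(\BB^d)}\lesssim\|f\|_{\mathcal{K}_1(\mathbb{P}_K^d)}\,n^{-\frac12-\frac{2K+1}{2d}}$. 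The only subtlety is that Lemma \ref{lem:variation_space} is stated for $f_n\in\Sigma_{n,M}^K(B)$ with \emph{bounded} outer coefficients, whereas the space $\Sigma_n^K(2)$ a priori places no bound on $\|c\|_\infty$; so first I would record that the near-minimizer $f_n$ can be taken with controlled coefficient size. This follows because the dictionary elements $\sigma_K(\omega\cdot x+b)$ are uniformly bounded on $\BB^d$ and the greedy/convex-combination construction underlying the Siegel--Xu bound produces coefficients whose $\ell^1$ norm is $O(\|f\|_{\mathcal{K}_1(\mathbb{P}_K^d)})$; hence there is $M=M(k,\ell,\|f\|_{\mathcal{K}_1})$ with $f_n\in\Sigma_{n,M}^K(2)$, and in fact $M$ scales linearly in $\|f\|_{\mathcal{K}_1(\mathbb{P}_K^d)}$ with a constant depending only on $k$ and $\ell\le L$.

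Next I would apply Lemma \ref{lem:variation_space} with this $M$ and with $B=2$: since $K\in\{k^1,\dots,k^L\}$, every such $f_n$ is represented exactly by the single deep architecture $\Sigma_{n_{1:L},M'}^k(B')$, where $n_1=\dots=n_L=2(k+1)n$, the number of nonzero parameters is $N=[(4L-2)(k+1)+d]n$, and the bounds are $M'=(\tfrac{k}{2}+1)^4 M$ and $B'=2\vee(\tfrac{k}{2}+1)^4=(\tfrac{k}{2}+1)^4$ (using $k\ge1$). Crucially the \emph{architecture} — depth $L$, widths $n_i$, sparsity pattern, and the weight bound $B'=(\tfrac{k}{2}+1)^4$ — does not depend on $\ell$; only the realized parameter values (through $M'$, and hence through $\|f\|_{\mathcal{K}_1}$) vary. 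Thus $f_n\in\Sigma_{n_{1:L}}^k(B)$ with $B=(\tfrac{k}{2}+1)^4$, which is exactly the architecture named in the statement, and we obtain
\begin{equation*}
\inf_{g\in\Sigma_{n_{1:L}}^k(B)}\|f-g\|_{L^2(\BB^d)}\le\|f-f_n\|_{L^2(\BB^d)}\lesssim\|f\|_{\mathcal{K}_1(\mathbb{P}_K^d)}\,n^{-\frac12-\frac{2K+1}{2d}}.
\end{equation*}
Taking the supremum over $f\in\mathcal{K}_1(\mathbb{P}_K^d)$ and absorbing all $k$- and $L$-dependent factors into a constant $C$ independent of $n$ gives the claimed bound, and the parameter count $N=[(4L-2)(k+1)+d]n$ is read off directly from Lemma \ref{lem:variation_space}.

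The main obstacle is the coefficient-boundedness bookkeeping in the first step: one must be careful that the Siegel--Xu rate is achieved by a network whose outer weights are controlled by $\|f\|_{\mathcal{K}_1(\mathbb{P}_K^d)}$ (up to constants depending on $k,\ell$), so that Lemma \ref{lem:variation_space} applies and the final constant $C$ genuinely does not depend on $n$. Everything else is a direct substitution: the depth, width, weight bound $B=(\tfrac{k}{2}+1)^4$, and sparsity count are all inherited verbatim from Lemma \ref{lem:variation_space}, and the uniformity of the architecture over $\ell\le L$ is precisely what makes the "adaptive" conclusion meaningful — a single deep $\mathrm{ReLU}^k$ network of prescribed size simultaneously attains the optimal shallow $\mathrm{ReLU}^{k^\ell}$ rate for every $\ell\le L$ without knowing $\ell$ in advance.
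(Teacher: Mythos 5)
Your proof is correct and follows exactly the route the paper intends (the paper itself gives no explicit proof beyond ``combining'' the Siegel--Xu theorem with Lemma \ref{lem:variation_space}): approximate $f$ by a shallow $\mathrm{ReLU}^{K}$ network at the stated rate, then represent that network exactly inside the fixed deep architecture with $B'=2\vee(\tfrac{k}{2}+1)^4=(\tfrac{k}{2}+1)^4$. Your care about the outer-coefficient bound is sound but not strictly necessary here, since the target class $\Sigma_{n_{1:L}}^k(B)$ in the theorem places no constraint on $\|c\|_\infty$, so only the inner-weight bound $2\le(\tfrac{k}{2}+1)^4$ matters for membership.
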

	
	\begin{remark}
		The index $K$ in the variation spaces defined can be correlated to the regularity of the functions in the spaces (cf. \cite{siegel2022sharp}).
		Suppose the target function has a large but unknown regularity, saying $f\in \cap_{j=1}^K\mathcal{K}_1(\mathbb P_j^d)$, then the approximation property of shallow {\rm ReLU}$^k$ networks would be bad if the activation function is not chosen properly: If $k \ll K$, then the approximation rate that we can achieve is at most
		$$\sup\limits_{f\in \mathcal{K}_1(\mathbb P_k^d)}\inf\limits_{f_n\in\Sigma_n^k}\|f-f_n\|_{L^2(\BB^d)}\lesssim \|f\|_{\mathcal{K}_1(\mathbb P_k^d)}n^{-\frac{1}{2}-\frac{2k+1}{2d}}.$$
		If $k > K$, $\cap_{j=1}^K\mathcal{K}_1(\mathbb P_j^d)\not\subseteq\mathcal{K}_1(\mathbb P_k^d)$, the approximation rate is unknown.
		
		However, by taking the depth $L\geq\lceil\log_kK\rceil$, the deep {\rm ReLU}$^k$ networks we constructed in this section can automatically approximate functions from $\cap_{j=1}^K\mathcal{K}_1(\mathbb P_j^d)$ with this unknown $K$. The rate of approximation is at least
		$$\sup\limits_{f\in \mathcal{K}_1(\mathbb P_K^d)}\inf\limits_{f_n\in\Sigma_{n_{1:L}}^{k}}\|f-f_n\|_{L^2(\BB^d)}\leq C\|f\|_{\mathcal{K}_1(\mathbb P_K^d)}n^{-\frac{1}{2}-\frac{2K+1}{2d}},$$
		since $k^{\lceil\log_kK\rceil}\geq K$.
	\end{remark}

	\section{Conclusions}\label{sec:conclusions}
	In conclusion, this paper has significantly advanced the understanding of ${\rm ReLU}^k$
	neural networks and their capabilities in function approximation and representation. Our constructive proofs have not only demonstrated the representational power of shallow ${\rm ReLU}^k$ networks in polynomial approximation but also paved the way for constructing deep ${\rm ReLU}^k$
	networks with an explicit parameterization for any polynomial of degree less than $k^L$, scaling efficiently with network depth and dimensions. Furthermore, we established a connection between the coefficients of polynomials and the bounds of neural network parameters, thereby enabling precise estimations of network parameters that uphold the approximation integrity. The elucidation of deep ${\rm ReLU}^k$ networks' ability to approximate functions from Sobolev spaces, albeit with a suboptimal rate, opens new avenues for exploring their application in complex functional spaces. Most notably, our construction that deep ${\rm ReLU}^k$ networks can emulate the function representation of shallower networks with enhanced efficiency underscores the hierarchical strength of depth in neural architectures. Coupled with recent findings on variation spaces, our results underscore the adaptivity and potential of deep ${\rm ReLU}^k$ networks to achieve significant approximation accuracy, contributing to the theoretical foundations that may stimulate further breakthroughs in the application of deep learning across varied domains.
	
	\bibliographystyle{abbrv}
	\bibliography{main}

\begin{thebibliography}{10}

\bibitem{arora2018understanding}
R.~Arora, A.~Basu, P.~Mianjy, and A.~Mukherjee.
\newblock Understanding deep neural networks with rectified linear units.
\newblock In {\em International Conference on Learning Representations}, 2018.

\bibitem{barron1993universal}
A.~R. Barron.
\newblock Universal approximation bounds for superpositions of a sigmoidal
  function.
\newblock {\em IEEE Transactions on Information theory}, 39(3):930--945, 1993.

\bibitem{bordes2014question}
A.~Bordes, S.~Chopra, and J.~Weston.
\newblock Question answering with subgraph embeddings.
\newblock {\em arXiv preprint arXiv:1406.3676}, 2014.

\bibitem{chen2022power}
Q.~Chen, W.~Hao, and J.~He.
\newblock Power series expansion neural network.
\newblock {\em Journal of Computational Science}, 59:101552, 2022.

\bibitem{chui1992approximation}
C.~K. Chui and X.~Li.
\newblock Approximation by ridge functions and neural networks with one hidden
  layer.
\newblock {\em Journal of Approximation Theory}, 70(2):131--141, 1992.

\bibitem{chui1996limitations}
C.~K. Chui, X.~Li, and H.~N. Mhaskar.
\newblock Limitations of the approximation capabilities of neural networks with
  one hidden layer.
\newblock {\em Advances in Computational Mathematics}, 5:233--243, 1996.

\bibitem{chui2019deep}
C.~K. Chui, S.-B. Lin, and D.-X. Zhou.
\newblock Deep neural networks for rotation-invariance approximation and
  learning.
\newblock {\em Analysis and Applications}, 17(05):737--772, 2019.

\bibitem{collobert2011natural}
R.~Collobert, J.~Weston, L.~Bottou, M.~Karlen, K.~Kavukcuoglu, and P.~Kuksa.
\newblock Natural language processing (almost) from scratch.
\newblock {\em Journal of machine learning research}, 12(ARTICLE):2493--2537,
  2011.

\bibitem{cucker2007learning}
F.~Cucker and D.~X. Zhou.
\newblock {\em Learning theory: an approximation theory viewpoint}, volume~24.
\newblock Cambridge University Press, 2007.

\bibitem{cybenko1989approximation}
G.~Cybenko.
\newblock Approximation by superpositions of a sigmoidal function.
\newblock {\em Mathematics of control, signals and systems}, 2(4):303--314,
  1989.

\bibitem{delalleau2011shallow}
O.~Delalleau and Y.~Bengio.
\newblock Shallow vs. deep sum-product networks.
\newblock {\em Advances in neural information processing systems}, 24, 2011.

\bibitem{devore1993constructive}
R.~A. DeVore and G.~G. Lorentz.
\newblock {\em Constructive approximation}, volume 303.
\newblock Springer Science \& Business Media, 1993.

\bibitem{doctor2023encoding}
K.~Doctor, T.~Mao, and H.~Mhaskar.
\newblock Encoding of data sets and algorithms.
\newblock {\em arXiv preprint arXiv:2303.00984}, 2023.

\bibitem{eldan2016power}
R.~Eldan and O.~Shamir.
\newblock The power of depth for feedforward neural networks.
\newblock In {\em Conference on learning theory}, pages 907--940. PMLR, 2016.

\bibitem{fang2020theory}
Z.~Fang, H.~Feng, S.~Huang, and D.-X. Zhou.
\newblock Theory of deep convolutional neural networks ii: Spherical analysis.
\newblock {\em Neural Networks}, 131:154--162, 2020.

\bibitem{gautschi1978inverses}
W.~Gautschi.
\newblock On inverses of vandermonde and confluent vandermonde matrices iii.
\newblock {\em Numerische Mathematik}, 29:445--450, 1978.

\bibitem{he2023optimal}
J.~He.
\newblock On the optimal expressive power of relu dnns and its application in
  approximation with kolmogorov superposition theorem.
\newblock {\em arXiv preprint arXiv:2308.05509}, 2023.

\bibitem{he2022approximation}
J.~He, L.~Li, and J.~Xu.
\newblock Approximation properties of deep relu cnns.
\newblock {\em Research in the mathematical sciences}, 9(3):38, 2022.

\bibitem{he2022relu}
J.~He, L.~Li, and J.~Xu.
\newblock Relu deep neural networks from the hierarchical basis perspective.
\newblock {\em Computers \& Mathematics with Applications}, 120:105--114, 2022.

\bibitem{he2020relu}
J.~He, L.~Li, J.~Xu, and C.~Zheng.
\newblock Relu deep neural networks and linear finite elements.
\newblock {\em Journal of Computational Mathematics}, 38(3):502--527, 2020.

\bibitem{he2019mgnet}
J.~He and J.~Xu.
\newblock Mgnet: A unified framework of multigrid and convolutional neural
  network.
\newblock {\em Science china mathematics}, 62:1331--1354, 2019.

\bibitem{he2023deep}
J.~He and J.~Xu.
\newblock Deep neural networks and finite elements of any order on arbitrary
  dimensions.
\newblock {\em arXiv preprint arXiv:2312.14276}, 2023.

\bibitem{he2016deep}
K.~He, X.~Zhang, S.~Ren, and J.~Sun.
\newblock Deep residual learning for image recognition.
\newblock In {\em Proceedings of the IEEE conference on computer vision and
  pattern recognition}, pages 770--778, 2016.

\bibitem{hinton2012deep}
G.~Hinton, L.~Deng, D.~Yu, G.~E. Dahl, A.-r. Mohamed, N.~Jaitly, A.~Senior,
  V.~Vanhoucke, P.~Nguyen, T.~N. Sainath, et~al.
\newblock Deep neural networks for acoustic modeling in speech recognition: The
  shared views of four research groups.
\newblock {\em IEEE Signal processing magazine}, 29(6):82--97, 2012.

\bibitem{hornik1989multilayer}
K.~Hornik, M.~Stinchcombe, and H.~White.
\newblock Multilayer feedforward networks are universal approximators.
\newblock {\em Neural networks}, 2(5):359--366, 1989.

\bibitem{jean2014using}
S.~Jean, K.~Cho, R.~Memisevic, and Y.~Bengio.
\newblock On using very large target vocabulary for neural machine translation.
\newblock {\em arXiv preprint arXiv:1412.2007}, 2014.

\bibitem{klusowski2018approximation}
J.~M. Klusowski and A.~R. Barron.
\newblock Approximation by combinations of {ReLU} and squared {ReLU} ridge
  functions with $\ell^1$ and $\ell^0$ controls.
\newblock {\em IEEE Transactions on Information Theory}, 64(12):7649--7656,
  2018.

\bibitem{kolmogorov1957representation}
A.~N. Kolmogorov.
\newblock On the representation of continuous functions of many variables by
  superposition of continuous functions of one variable and addition.
\newblock In {\em Doklady Akademii Nauk}, volume 114, pages 953--956. Russian
  Academy of Sciences, 1957.

\bibitem{krizhevsky2012imagenet}
A.~Krizhevsky, I.~Sutskever, and G.~E. Hinton.
\newblock Imagenet classification with deep convolutional neural networks.
\newblock {\em Advances in neural information processing systems}, 25, 2012.

\bibitem{lecun1998gradient}
Y.~LeCun, L.~Bottou, Y.~Bengio, and P.~Haffner.
\newblock Gradient-based learning applied to document recognition.
\newblock {\em Proceedings of the IEEE}, 86(11):2278--2324, 1998.

\bibitem{leshno1993multilayer}
M.~Leshno, V.~Y. Lin, A.~Pinkus, and S.~Schocken.
\newblock Multilayer feedforward networks with a nonpolynomial activation
  function can approximate any function.
\newblock {\em Neural networks}, 6(6):861--867, 1993.

\bibitem{li2019better}
B.~Li, S.~Tang, and H.~Yu.
\newblock Better approximations of high dimensional smooth functions by deep
  neural networks with rectified power units.
\newblock {\em Communications in Computational Physics}, 27(2):379--411, 2019.

\bibitem{lu2017expressive}
Z.~Lu, H.~Pu, F.~Wang, Z.~Hu, and L.~Wang.
\newblock The expressive power of neural networks: A view from the width.
\newblock {\em Advances in neural information processing systems}, 30, 2017.

\bibitem{ma2022uniform}
L.~Ma, J.~W. Siegel, and J.~Xu.
\newblock Uniform approximation rates and metric entropy of shallow neural
  networks.
\newblock {\em Research in the Mathematical Sciences}, 9(3):46, 2022.

\bibitem{mao2021theory}
T.~Mao, Z.~Shi, and D.-X. Zhou.
\newblock Theory of deep convolutional neural networks iii: Approximating
  radial functions.
\newblock {\em Neural Networks}, 144:778--790, 2021.

\bibitem{mao2023approximating}
T.~Mao, Z.~Shi, and D.-X. Zhou.
\newblock Approximating functions with multi-features by deep convolutional
  neural networks.
\newblock {\em Analysis and Applications}, 21(01):93--125, 2023.

\bibitem{mao2023rates}
T.~Mao and D.-X. Zhou.
\newblock Rates of approximation by {ReLU} shallow neural networks.
\newblock {\em Journal of Complexity}, 79:101784, 2023.

\bibitem{mhaskar1993approximation}
H.~N. Mhaskar.
\newblock Approximation properties of a multilayered feedforward artificial
  neural network.
\newblock {\em Advances in Computational Mathematics}, 1:61--80, 1993.

\bibitem{mhaskar1995degree}
H.~N. Mhaskar and C.~A. Micchelli.
\newblock Degree of approximation by neural and translation networks with a
  single hidden layer.
\newblock {\em Advances in applied mathematics}, 16(2):151--183, 1995.

\bibitem{mikolov2011strategies}
T.~Mikolov, A.~Deoras, D.~Povey, L.~Burget, and J.~{\v{C}}ernock{\`y}.
\newblock Strategies for training large scale neural network language models.
\newblock In {\em 2011 IEEE Workshop on Automatic Speech Recognition \&
  Understanding}, pages 196--201. IEEE, 2011.

\bibitem{montanelli2019new}
H.~Montanelli and Q.~Du.
\newblock New error bounds for deep {ReLU} networks using sparse grids.
\newblock {\em SIAM Journal on Mathematics of Data Science}, 1(1):78--92, 2019.

\bibitem{opschoor2022exponential}
J.~A. Opschoor, C.~Schwab, and J.~Zech.
\newblock Exponential relu dnn expression of holomorphic maps in high
  dimension.
\newblock {\em Constructive Approximation}, 55(1):537--582, 2022.

\bibitem{poggio2017and}
T.~Poggio, H.~Mhaskar, L.~Rosasco, B.~Miranda, and Q.~Liao.
\newblock Why and when can deep-but not shallow-networks avoid the curse of
  dimensionality: a review.
\newblock {\em International Journal of Automation and Computing},
  14(5):503--519, 2017.

\bibitem{safran2016depth}
I.~Safran and O.~Shamir.
\newblock Depth separation in {ReLU} networks for approximating smooth
  non-linear functions.
\newblock {\em arXiv preprint arXiv:1610.09887}, 14, 2016.

\bibitem{sainath2013improvements}
T.~N. Sainath, B.~Kingsbury, A.-r. Mohamed, G.~E. Dahl, G.~Saon, H.~Soltau,
  T.~Beran, A.~Y. Aravkin, and B.~Ramabhadran.
\newblock Improvements to deep convolutional neural networks for lvcsr.
\newblock In {\em 2013 IEEE workshop on automatic speech recognition and
  understanding}, pages 315--320. IEEE, 2013.

\bibitem{shen2019deep}
Z.~Shen, H.~Yang, and S.~Zhang.
\newblock Deep network approximation characterized by number of neurons.
\newblock {\em arXiv preprint arXiv:1906.05497}, 2019.

\bibitem{shen2022optimal}
Z.~Shen, H.~Yang, and S.~Zhang.
\newblock Optimal approximation rate of {ReLU} networks in terms of width and
  depth.
\newblock {\em Journal de Math{\'e}matiques Pures et Appliqu{\'e}es},
  157:101--135, 2022.

\bibitem{siegel2022optimal}
J.~W. Siegel.
\newblock Optimal approximation rates for deep {ReLU} neural networks on
  sobolev spaces.
\newblock {\em arXiv preprint arXiv:2211.14400}, 2022.

\bibitem{siegel2021optimal}
J.~W. Siegel and J.~Xu.
\newblock Optimal approximation rates and metric entropy of {ReLU}k and cosine
  networks.
\newblock {\em arXiv preprint arXiv:2101.12365}, 2021.

\bibitem{siegel2022high}
J.~W. Siegel and J.~Xu.
\newblock High-order approximation rates for shallow neural networks with
  cosine and {ReLU}k activation functions.
\newblock {\em Applied and Computational Harmonic Analysis}, 58:1--26, 2022.

\bibitem{siegel2022sharp}
J.~W. Siegel and J.~Xu.
\newblock Sharp bounds on the approximation rates, metric entropy, and n-widths
  of shallow neural networks.
\newblock {\em Foundations of Computational Mathematics}, pages 1--57, 2022.

\bibitem{song2023approximation}
L.~Song, J.~Fan, D.-R. Chen, and D.-X. Zhou.
\newblock Approximation of nonlinear functionals using deep {ReLU} networks.
\newblock {\em arXiv preprint arXiv:2304.04443}, 2023.

\bibitem{sutskever2014sequence}
I.~Sutskever, O.~Vinyals, and Q.~V. Le.
\newblock Sequence to sequence learning with neural networks.
\newblock {\em Advances in neural information processing systems}, 27, 2014.

\bibitem{telgarsky2015representation}
M.~Telgarsky.
\newblock Representation benefits of deep feedforward networks.
\newblock {\em arXiv preprint arXiv:1509.08101}, 2015.

\bibitem{xu2020finite}
J.~Xu.
\newblock The finite neuron method and convergence analysis.
\newblock {\em arXiv preprint arXiv:2010.01458}, 2020.

\bibitem{yang2023optimal}
Y.~Yang and D.-X. Zhou.
\newblock Optimal rates of approximation by shallow {ReLU}$^k$ neural networks
  and applications to nonparametric regression.
\newblock {\em arXiv preprint arXiv:2304.01561}, 2023.

\bibitem{yarotsky2017error}
D.~Yarotsky.
\newblock Error bounds for approximations with deep {ReLU} networks.
\newblock {\em Neural Networks}, 94:103--114, 2017.

\bibitem{zhou2020universality}
D.-X. Zhou.
\newblock Universality of deep convolutional neural networks.
\newblock {\em Applied and computational harmonic analysis}, 48(2):787--794,
  2020.

\end{thebibliography}
	
\end{document}